\newcommand{\tr}{\ensuremath{\operatorname{tr}}}
\newcommand{\real}{\ensuremath{\mathbb{R}}}
\newcommand{\En}{\ensuremath{\mathbb{E}}}
\newcommand{\ind}{\mathbb{I}}
\newcommand{\defn}{:\,=}
\newtheorem{theorem}{Theorem}
\newtheorem{lemma}{Lemma}
\newtheorem{proposition}{Proposition}
\newtheorem{definition}{Definition}
\newtheorem{assumption}{Assumption}
\newcommand{\1}{\ensuremath{{\sf (i)}}}
\newcommand{\2}{\ensuremath{{\sf (ii)}}}
\newcommand{\3}{\ensuremath{{\sf (iii)}}}
\newcommand\inner[2]{\langle #1, #2 \rangle}
\newcommand{\X}{\mathcal{X}}
\newcommand{\reg}{\mathfrak{R}}
\newcommand{\win}{\Delta}
\newcommand{\eps}{\epsilon}
\newcommand{\mdp}{\mathcal{M}}
\newcommand{\K}{K}
\newcommand{\T}{T}
\newcommand{\hist}{h}
\newcommand{\ac}{a}
\newcommand{\Hset}{\mathcal{H}_\win}
\newcommand{\congf}{f_{{\sf cong}}}
\newcommand{\pol}{\pi}
\newcommand{\polset}{\Pi}
\newcommand{\rew}{r}
\newcommand{\rewtil}{\tilde{\rew}}
\newcommand{\rewhat}{\hat{\rew}}
\newcommand{\rmean}{\mu}
\newcommand{\rmeantil}{\tilde{\rmean}}
\newcommand{\alg}{{\sf alg}\xspace}
\newcommand{\St}{S}
\newcommand{\Ac}{A}
\newcommand{\st}{s}
\newcommand{\diam}{D}
\newcommand{\mab}{{\sf mab}}
\newcommand{\mabst}{{\sf st}}
\newcommand{\num}{n}
\newcommand{\numsum}{N}
\newcommand{\ravg}{\rho}
\newcommand{\del}{\delta}
\newcommand{\const}{c}
\newcommand{\Rset}{\mathcal{R}}
\newcommand{\regep}{\mathfrak{r}}
\newcommand{\basis}{\mathbf{e}}
\newcommand{\nacst}{\#}
\newcommand{\x}{x}
\newcommand{\param}{\theta}
\newcommand{\params}{\theta_*}
\newcommand{\feat}{\phi}
\newcommand{\epset}{I}
\newcommand{\lmin}{\gamma}
\newcommand{\feattil}{\tilde{\feat}}
\newcommand{\cov}{\Sigma}
\newcommand{\al}{\alpha_{l}}
\newcommand{\au}{\alpha_{u}}
\newcommand{\Feat}{\Phi}
\newcommand{\mcf}{f}
\newcommand{\tmix}{\tau_{{\sf mix}}}
\newcommand{\dst}{d}
\newcommand{\dopt}{d^*}
\newcommand{\tmixmax}{\tmix^*}
\long\def\@makecaption#1#2{
        \vskip 0.8ex
        \setbox\@tempboxa\hbox{\small {\bf #1:} #2}
        \parindent 1.5em  
        \dimen0=\hsize
        \advance\dimen0 by -3em
        \ifdim \wd\@tempboxa >\dimen0
                \hbox to \hsize{
                        \parindent 0em
                        \hfil
                        \parbox{\dimen0}{\def\baselinestretch{0.96}\small
                                {\bf #1.} #2
                                }
                        \hfil}
        \else \hbox to \hsize{\hfil \box\@tempboxa \hfil}
        \fi
        }
\newcommand{\algmab}{\textsc{Carmab}\xspace}
\newcommand{\algmabst}{\textsc{Carmab-st}\xspace}
\newcommand{\algcb}{\textsc{Carcb}\xspace}
\newcommand{\G}{G}
\newcommand{\V}{V}
\newcommand{\E}{E}
\newcommand{\ep}{\mathfrak{e}}
\newcommand{\Ep}{\mathcal{E}}
\newcommand{\start}{s_{\sf{G}}}
\newcommand{\targ}{t_{\sf{G}}}
\newcommand{\ed}{e}
\newcommand{\route}{p}
\newcommand{\stpath}{\start\text{-}\targ}
\newcommand{\polsetX}{\polset_\X}
\newcommand{\xbar}{\bar{\x}}
\newcommand{\numst}{N_{\mabst}}
\newcommand{\len}{L}
\newcommand{\ravgl}[1]{\ravg_{L, #1}}
\newcommand{\cmin}{c_{{\sf min}}}
\icmltitlerunning{Congested Bandits}
\begin{document}
\twocolumn[
\icmltitle{Congested Bandits: Optimal Routing via Short-term Resets}



\icmlsetsymbol{equal}{*}
\begin{icmlauthorlist}
\icmlauthor{Pranjal Awasthi}{xxx}
\icmlauthor{Kush Bhatia}{yyy}
\icmlauthor{Sreenivas Gollapudi}{xxx}
\icmlauthor{Kostas Kollias}{xxx}
\end{icmlauthorlist}

\icmlaffiliation{yyy}{UC Berkeley}
\icmlaffiliation{xxx}{Google Research}

\icmlcorrespondingauthor{Kush Bhatia}{kushbhatia@berkeley.edu}
\icmlkeywords{Machine Learning, ICML}

\vskip 0.3in
]

\printAffiliationsAndNotice{}  



%

\newcommand{\fix}{\marginpar{FIX}}
\newcommand{\new}{\marginpar{NEW}}



\begin{abstract}
    For traffic routing platforms, the choice of which route to recommend to a user depends on the congestion on these routes -- indeed, an individual's utility depends on the number of people using the recommended route at that instance.
    Motivated by this, we introduce the problem of Congested Bandits where each arm's reward is allowed to depend on the number of times it was played in the past $\win$ timesteps. This dependence on past history of actions leads to a dynamical system where an algorithm's present choices also affect its future pay-offs, and requires an algorithm to plan for this.
    We study the congestion aware formulation in the  multi-armed bandit (MAB) setup and in the  contextual bandit setup with linear rewards. For the multi-armed setup, we propose a UCB style algorithm  and show that its policy regret scales as $\tilde{O}(\sqrt{\K \win T})$. For the linear contextual bandit setup, our algorithm, based on an iterative least squares planner, achieves policy regret $\tilde{O}(\sqrt{dT} + \win)$. From an experimental standpoint, we corroborate the no-regret properties of our algorithms via a simulation study.
\end{abstract}
\section{Introduction}\vspace{-2mm}
\label{sec:intro}
The online multi-armed bandit (MAB) problem and its extensions have been widely studied and used to model many real world scenarios \citep{robbins1952some,auer2002finite, auer2001non}.  In the basic MAB setup there are $K$ arms with either stochastic or adversarially chosen reward profiles. The goal is to design an algorithm that achieves a cumulative reward that is as good as that of the best arm in hindsight. This is quantified in terms of the regret achieved by the algorithm over $T$ time steps (see Section~\ref{sec:cmab} for formal definitions). In many real world scenarios the MAB setup as described above is not suitable as the reward obtained by playing an arm/action at a given time step may depend on the algorithm's choices in the previous time steps. In particular, we are motivated by online routing problems where 
the reward of suggesting a particular edge to traverse along a path from source to destination often depends on the congestion on that edge. This congestion is a function of the number of times the particular edge has been recommended earlier (potentially in a time window). In such scenarios, one would desire algorithms that can compete with the best {\em policy}, i.e., the best sequence of actions, in hindsight as compared to a fixed best action.

Classical multi-armed bandit formulation and associated no-regret algorithms \citep{slivkins2019introduction}, or their extensions to routing problems \citep{kalai2005efficient, awerbuch2008} do not suffice for the above scenario as they only guarantee competitiveness with respect to the best fixed arm in hindsight. To overcome these limitations, we propose a new model, {\em viz.}, {\em congested bandits} which captures the above scenario. In our proposed model the reward of a given arm at each time step depends on how many times the arm has been played within a given time window of size $\Delta$. Hence over time, an arm's expected reward may decay and reset dynamically. While our model is motivated by online routing problems, our proposed formulation is very general. 
As another example, consider a digital music platform that recommends artists to its end users. In order to maximize profit the recommendation algorithm may prefer to recommend popular artists, and at the same time the platform may want to promote equity and diversity by highlighting new and emerging artists as well. This scenario can be model via congested bandits where each artist is an arm and the reward for suggesting an artist is a function of how many times the artist has been recommended in the past time window of length $\Delta$. In both the scenarios above, the ability to reset the congestion cost (by simply not playing an arm for $\Delta$ time steps) is a crucial part of the problem formulation.

\begin{figure*}[t]
\centering
\includegraphics[width=1.0\textwidth]{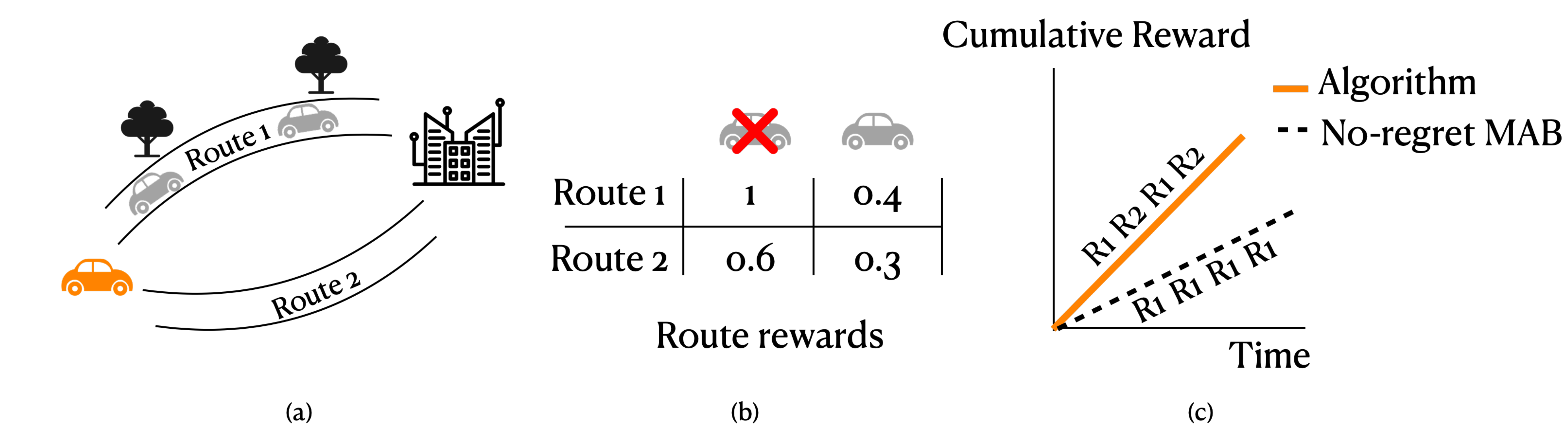}
\caption{Our proposed Congested Bandits framework. (a) A route recommendation scenario where an algorithm can recommend one of two routes to the incoming vehicle. (b) The reward for each route depends on whether there is congestion on the route or not. (c) Traditional multi-armed bandit algorithms learn to recommend the best route, Route 1 for every incoming vehicle. This is clearly suboptimal. Our algorithm, \algmab, adapts to the congestion and achieves better performance.}
\label{fig:intro}
\end{figure*}

\textbf{Our contributions.} We propose and study the {\em congested bandits} model with {\em short term resets} under a variety of settings and design no-regret algorithms. In the most basic setup we consider a $K$-armed stochastic bandit problem where each arm $a$ has a mean reward $\mu_a$, and the mean reward obtained by playing arm $a$ at time $t$ equals $f_{\text{cong}}(a, h_t)\mu_a$. Here $h_t$ denotes the history of the algorithm's choices within the last $\Delta$ time steps and $f_{\text{cong}}$ is a non-increasing congestion function. Recall that the algorithm's goal in this setup is to compete with the best policy in hindsight. While the above setting can be formulated as a Markov Decision Process (MDP), existing no regret algorithms for MDPs will incur a regret bound that scales exponential in the parameters (scaling as $ K^{\Delta}$) \citep{auer2009near, jaksch2010}. Instead, we carefully exploit the problem structure to design an algorithm with near-optimal regret scaling as $\tilde{O}(\sqrt{K \Delta T})$. Next, we extend our model to the case of online routing with congested edge rewards, again presenting near optimal no-regret algorithms that avoid exponential dependence on the size of the graph in the regret bounds. 

We then extend the multi-armed and the congested online routing formulations to a contextual setting where the mean reward for each arm/edge at a given time step equals $f_{\text{cong}}(a, h_t) \inner{\params}{ \phi(x_t, a))}$, where $x_t \in \real^d$ is a context vector and $\params$ is an unknown parameter. This extension is inspired from classical work in contextual bandits \citep{li2010contextual, chu2011contextual} and captures scenarios where users may have different preferences over actions/arms (e.g., a user who wants to avoid routes with tolls). Solving the contextual case poses significant hurdles as a priori it is not even clear whether the setting can be captured via an MDP. By exploiting the structure of the problem, we present a novel epoch based algorithm that, at each time step, plays a near optimal policy by planning for the next epoch. Showing that such planning can be done when only given access to the distribution of the contexts is a key technical step in establishing the correctness of the algorithm. As a result, we obtain algorithms that achieve $\tilde{O}(\sqrt{dT} + \win)$ regret in the contextual MAB setting. 
Finally, using simulations, we perform an empirical evaluation of the effectiveness of our proposed algorithms.

\paragraph{Related work.}
\label{sec:related_work}
Closest to our work are studies on multi-armed bandits with decaying and/or improving costs. The work of \citep{levine2017} proposes the rotting bandits model that has been further studied in \citep{seznec2019}. In this model the mean reward of each arm decays in a monotonic way as a function of the number of times the arm has been played in the past. There is no notion of a short-term reset as in our setting. As a result it can be shown that a simply greedy policy is optimal in hindsight. In contrast, in our setting greedy approaches can fail miserably as highlighted in Figure~\ref{fig:intro}. The work of \citet{heidari2016} considers a setting where the mean reward of an arm can either improve or decay as a function of the number of times it has been played. However, similar to the rotting bandits setup there is no notion of a reset. \citet{pike2019} considers a notion of reset/improvement by allowing the mean reward to depend on the number of time steps since an arm was last played. Their work considers a Bayesian setup where the mean reward function is modelled by a draw from a Gaussian process. On the other hand, in our work, the reward is a function of the number of times an arm was played in the past $\Delta$ time steps. However their regret bounds are either with respect to the instantaneous regret, or with respect to a weaker policy class of $d$-step look ahead policies. The notion of instantaneous regret only compares with the class of greedy policies at each timestep as compared to the globally optimal policy. 

There also exist works studying the design of online learning algorithms against {\em $m$-bounded memory} adversaries. The assumption here is that the reward of an action at each time step depends only on the previous $m$ actions. While this is also true for our setting, in general the regret bounds provided for bounded-memory adversaries are either with respect to a fixed action or a policy class containing policies that do no switch often \citep{arora2012, anava2015}. Our problem formulation for the contextual setup is reminiscent of the recent line of works on contextual MDPs \citep{azizzadenesheli2016, krishnamurthy2016, hallak2015, modi2020}. However these works either assume that the context vector is fixed for a given episode (allowing for easier planning), or make strong realizability assumptions on the optimal $Q$-function. Finally, our congested bandits formulation of the routing problem is a natural extension of classical work on the online shortest path problem \cite{kalai2005efficient, awerbuch2008, dani2007}.
\section{Congested multi-armed bandits}\label{sec:cmab}
\vspace{-2mm}In this section, we model the congestion problem in a multi-armed bandit (MAB) framework.
Our setup for congested multi-armed bandits models the congestion phenomenon by allowing the rewards of an arm to depend on the number of times it was played in the past. 

Let us consider a MAB setup with $\K$ arms, where each arm may represent a possible route. Let us denote by $\win$ the size of the window which affects the reward at the current time step. For any time $t$, let $\hist_t \in \Hset \defn [\K]^\win$ denote the history of the actions taken by an algorithm\footnote{We usually suppress the dependence of this history on the algorithm, but make it explicit whenever it is not clear from context.} in the past $\win$ time steps, that is, $\hist_t = [\ac_{t-\win}, \ldots, \ac_{t-1}]$ where $\ac_\tau$ is the action chosen by the algorithm at time $\tau$. In order to model congestion arising from repeated plays of a single arm, we consider a function $\congf : [\K] \times [\win]_+ \to (0,1]$. This congestion function takes in two arguments: an arm $\ac$ and the number of times this arm was played in the past $\win$ time steps, and outputs a value indicating the \emph{decay} in the reward of arm $\ac$ arising from congestion. 

\paragraph{Protocol for congestion in bandits.} We consider the following online learning protocol for a learner in our congested MAB framework: At each round $t$, the learner picks an arm $\ac_t \in [K]$ and observes reward 
\begin{equation*}\rewtil(\hist_t, \ac_t) = \congf(\ac_t, \nacst(\hist_t, \ac_t))\cdot \rmean_{a_t} + \eps_t; \quad   \eps_t \sim \mathcal{N}(0,1)).\end{equation*} 
Finally, the history changes to $\hist_{t+1} = [\hist_{t, 2:\win}, \ac_t]$ where  
we have used the notation $\hist_{t, i:j}$ to denote the vector $[\hist_t(i), \ldots, \hist_t(j)]$ and $\nacst(\hist, \ac)$ to denote the number of times the action $\ac$ was played in history $\hist$. 

Each arm $\ac$ is associated with a mean reward vector $\rmean_\ac$ and the congestion function multiplicatively decreases the reward of that arm. We assume that the learner does not know the exact form of the function $\congf$ as well as the the mean vector $\rmean = \{\rmean_a\}_{[\K]}$. The objective of the learner is to select the actions $\ac_t$ which minimizes a notion of policy regret, which we define next.

\paragraph{Policy regret for congested MAB.}  In the standard MAB setup, regret compares the cumulative reward of the algorithm to the benchmark of playing the arm with the highest mean reward at all time steps. However, as described in Section~\ref{sec:intro}, this benchmark is not suitable for our setup. Indeed, the asymptotically optimal algorithm is one which maximizes the average cumulative reward \begin{equation*}
    \rho^* \defn \max_\alg  \lim_{T \to \infty} \frac{1}{T}\sum_{t=1}^T \rew(\hist_t, \ac_t),
\end{equation*}
where the action $\ac_t$ is the one chosen by the algorithm $\alg$ and history $h_t$ is the sequence of actions in the past $\win$ time steps. 

This asymptotic algorithm corresponds to a stationary policy $\pol^*$ whose selection $\ac_t$ \emph{only} depends on the history $\hist_t$. Accordingly, we consider the following class of stationary policies as the comparator for our regret $    \polset = \{\pol: \Hset \to [K] \}\;$, with size $|\polset| = \K^{\K^\win}$. 
Denote by $\hist^\pol_t$ the history at time $t$ by running policy $\pol$ up to time $t$, we define the policy regret for any algorithm
\begin{align}\label{eq:pol-reg-mab}
       \reg_T(\alg;\polset, \congf) \defn \sup_{\pol \in \polset} \sum_{t=1}^\T \rew(\hist_t^\pol,\pol(\hist_t^\pol)) - \sum_{t=1}^\T \rewtil(\hist_t^\alg, \ac_t)\;,
\end{align}
where $\ac_t$ is the action chosen by $\alg$ at time $t$ and $\rew(\hist_t, \ac_t) = \congf(\hist_t, )\cdot \rmean_{a_t}$.  This notion of regret is called policy regret~\citep{arora2012} because the history sequence observed by the algorithm $\pol^*$ and the algorithm $\alg$ can be different from each other -- this leads to a situation where choosing the same action $\ac$ at time $t$ can lead to different rewards for the algorithm and the comparator.

\subsection{\algmab: Congested MAB algorithm}\label{sec:conman}

\begin{algorithm}[t!]
	\DontPrintSemicolon
	\KwIn{Congestion window $\win$, confidence parameter $\del \in (0,1)$, action set $[\K]$, time horizon $\T$.}
   \textbf{Initialize:} Set $t = 1$ \;
   \For{episodes $\ep = 1, \ldots, \Ep$}{
   \textbf{Initialize episode}\;
   Set start time of episode $t_\ep = t$.\;
   For all actions $\ac$ and historical count $j$, set $\num_\ep(\ac, j) = 0$ and $\numsum_\ep(\ac, j) = \sum_{s < e}\num_s(\ac, j)$.\;
   Set empirical reward estimate for each arm and historical count \begin{equation*}
       \rewhat_\ep(\ac, j) = \frac{\sum_{\tau = 1}^{t_\ep-1}\rew_\tau \cdot \ind[\ac_\tau = \ac, j_\tau = j]}{\max\{1, \numsum_\ep(\ac, j) \}}.
   \end{equation*}\;\vspace{-6mm}
   \textbf{Compute optimistic policy}\;
   Set the feasible rewards 
   \begin{align*}
       \Rset_\ep = &\left\lbrace\rew \in [0,1]^{\Ac \times (\win + 1)}\; | \; \text{for all } (\ac, j), \right.\\
       &|\left.\rew(\ac, j) - \rewhat_\ep(\ac, j)| \leq 10\sqrt{\frac{\log(\Ac \win t_\ep/\del)}{\max\{1, \numsum_\ep(\ac, j) \}}} \right\rbrace
   \end{align*} \;\vspace{-6mm}
   Find optimistic policy $\tilde{\pol}_\ep = \arg\max_{\pol \in \polset, \rew \in \Rset_\ep} {\ravg}(\pol, \rew)$\;
   \textbf{Execute optimistic policy}\;
   \While{$\num_\ep(\ac, j) < \max\{1, \numsum_\ep(\ac, j)\}$ }{
   Select arm $\ac_t = \tilde{\pol}_\ep(\st_t)$, obtain reward $\rewhat_t$.\;
   Update $\num_\ep(\ac, \#(\st_t, \ac)) = \num_\ep(\ac, \#(\st_t, \ac)) + 1$.\;
   }
   }
	\caption{\algmab: Congestion Aware Routing via Multi-Armed Bandits}
  \label{alg:mab-cong}
\end{algorithm}

We now describe our learning algorithm for the congested MAB problem. At a high level, \algmab, detailed in Algorithm~\ref{alg:mab-cong}, is based on a reduction of this problem to a reinforcement learning problem with state space $\St = \Hset$ and action space $\Ac = [\K]$, where the underlying dynamics are known to the learner. With this reduction, \algmab  deploys an epoch-based strategy which plays a optimistic policy $\tilde{\pol}_\ep$ computed from optimistic estimates of the reward function. 

\paragraph{Reduction to MDP.} Our congested MAB setup can be viewed as a learning problem in a Markov decision process (MDP) with finite state and action spaces. This MDP $\mdp_\mab$ comprises state space $\St = \Hset$ and action space $\Ac = [\K]$. The reward function for this MDP is given by $\rew(\st, \ac) = \congf(\ac, \nacst(\st, \ac))\cdot \rmean_{\ac}$ and the deterministic state transitions are given 
\begin{align}
    P(\st'|\st, \ac) = \begin{cases}
    1 &\quad \text{if } \st' = [\st_{2:\win}, \ac]\\
    0 &\quad \text{o.w.}
    \end{cases}\;,
\end{align}
where we have again used the notation $\st_{a:b}$ to denote the vector $[\st(a), \ldots, \st(b)]$.

\paragraph{Algorithm details.} Our learning algorithm in this MDP is an upper confidence bound (UCB) style algorithm, adapted from the classical UCRL2~\citep{jaksch2010} for learning in finite MDPs. It splits the time horizon $\T$ into a total of $\Ep$ epochs, each of which can be of varying length. In each episode, for every pair $(\ac, j)$ of action $\ac$ and historical count $j \in [\win]_+$, the algorithm computes the empirical estimate of the rewards $\rewhat_\ep(\ac, j)$ from observations in the past epoch and maintains a feasible set $\Rset_\ep$ of rewards. This set is constructed such that with high probability, the true reward $\rew$ belong to this set for each epoch. Given this set, our algorithm computes the optimistic policy
\begin{align}
    \begin{gathered}
    \tilde{\pol}_\ep = \arg\max_{\pol \in \polset, \rew \in \Rset_\ep} {\ravg}_\pol(\rew)\quad\\ \text{where} \quad \ravg_\pol \defn \lim_{T\to \infty} \frac{1}{T}\sum_{t=1}^T \rew(\hist_t^\pol,\pol(\hist_t^\pol))\;,
\end{gathered}
\end{align}

that is, the policy which achieves the best average expected reward with respect to the optimistic set $\Rset_\ep$. The optimistic policy $\pol_\ep$ is then deployed in the congested MAB setup till one of the $(a,j)$ pair doubles in the number of times it is played 
and this determines the size of any epoch. 

\vspace{-2mm}\paragraph{Computing optimistic policy.} In the MDP described above, each deterministic policy $\pol$ follows a cyclical path since the transition dynamics are deterministic and the state space is finite. With this insight, this problem of finding the optimal policy with highest average reward is equivalent to finding the maximum mean cycle in a weighted directed graph. In our simulations, we use Karp's algorithm~\citep{karp1978} for finding these optimistic policy. This algorithm runs in time $O(\K^{\win+1})$. 

\subsection{Regret analysis for \algmab}
In this section, we obtain a bound on the policy regret of the proposed algorithm \algmab. Our overall proof strategy is to first establish that the MDP $\mdp_{\mab}$ has a low diameter (the time taken to move from one state to another in the MDP), then bounding the regret in each episode $e$ of the process and finally establishing that the total number of episodes $\Ep$ can be at most logarithmic in the time horizon $\T$. Combining these elements, we establish in the following theorem that the regret of \algmab scales as $\tilde{O}(\sqrt{\K\win\T})$ with high probability.\footnote{For clarity purposes, through out the paper we denote by $\const$ an absolute constant whose value is independent of any problem parameter. We allow this value of $\const$ to change from line to line.}

\begin{theorem}[Regret bound for \algmab]\label{thm:reg-mab}
For any confidence $\del \in (0,1)$, congestion window $\win > 0$ and time horizon $\T > \win\K$, the policy regret~\eqref{eq:pol-reg-mab}, of \algmab is
\begin{align*}
    &\reg_T(\algmab;\polset, \congf) \leq c\cdot \win^2 \K \log\left(\frac{T}{\win \K} \right)\\
    &\quad + c\sqrt{\K\win \T\log\left(\frac{\win \K \T}{\del}\right)}+  \const\sqrt{\T \log\left(\frac{1}{\del} \right)}
\end{align*}
with probability at least $1-\del$. 
\end{theorem}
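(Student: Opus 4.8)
The plan is to analyze \algmab through the average-reward MDP $\mdp_\mab$ and reduce the policy regret of~\eqref{eq:pol-reg-mab} to a gain-regret bound, following the UCRL2 template but exploiting two structural features: the deterministic, known transitions, and the fact that $\rew(\st,\ac)=\congf(\ac,\nacst(\st,\ac))\cdot\rmean_\ac$ depends on the state only through the scalar count $\nacst(\st,\ac)\in[\win]_+$. I would first record the diameter bound $\diam\le\win$: since the state is the last $\win$ actions and a transition shifts in the played arm, replaying any target history $[b_1,\dots,b_\win]$ reaches that state from any start in exactly $\win$ steps, so the induced graph is communicating with diameter at most $\win$. Next, for the best stationary $\pol^\star$ the comparator sum $\sum_t \rew(\hist_t^\pol,\pol(\hist_t^\pol))$ equals $\T\ravg^\star$ up to an $O(\diam)$ transient, and the observed $\rewtil$ differ from their means $\rew$ by the martingale $\sum_t\eps_t$, which Azuma--Hoeffding controls by $\const\sqrt{\T\log(1/\del)}$ --- this is the third term.

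The optimism step is the crux. I would show that, with probability at least $1-\del$, the true reward lies in $\Rset_\ep$ for every episode simultaneously. Because the reward is indexed by the pair $(\ac,j)$ rather than by $(\st,\ac)$, a union bound over only $\K(\win+1)$ parameters and over the $\le\T$ possible epoch start times, combined with sub-Gaussian concentration of $\rewhat_\ep(\ac,j)$ around its mean at scale $1/\sqrt{\numsum_\ep(\ac,j)}$, yields exactly the confidence radii defining $\Rset_\ep$. Conditioned on this event, feasibility of the true reward gives optimism: the optimistic gain $\tilde\ravg_\ep$ computed over $\polset\times\Rset_\ep$ satisfies $\tilde\ravg_\ep\ge\ravg^\star$. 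It is precisely this $\K(\win+1)$-dimensional parameterization --- not the $\K^\win$ states --- that prevents the exponential blow-up.

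For the per-episode regret I would invoke the average-reward Poisson equation for $\tilde\pol_\ep$ on $\mdp_\mab$ with the optimistic reward $\tilde\rew_\ep$: there is a bias $\tilde h_\ep$ with $\tilde\ravg_\ep+\tilde h_\ep(\st)=\tilde\rew_\ep(\st,\tilde\pol_\ep(\st))+\tilde h_\ep(\st')$ for the deterministic successor $\st'$, and $\mathrm{sp}(\tilde h_\ep)\le\diam\le\win$. Since the transitions are known and deterministic there is no transition-estimation error (unlike vanilla UCRL2), so the per-step gap $\ravg^\star-\rew(\st_t,\ac_t)\le\tilde\ravg_\ep-\rew(\st_t,\ac_t)$ decomposes into the reward confidence width $\tilde\rew_\ep(\st_t,\ac_t)-\rew(\st_t,\ac_t)$, at most twice the radius at $(\ac_t,j_t)$, plus a telescoping bias difference $\tilde h_\ep(\st_{t+1})-\tilde h_\ep(\st_t)$ that collapses to at most $\mathrm{sp}(\tilde h_\ep)\le\win$ per episode.

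Finally I would aggregate across episodes. The stopping rule doubles the visit count of some $(\ac,j)$ pair, so each of the $\K(\win+1)$ counters triggers $O(\log(\T/(\win\K)))$ episodes, giving $\Ep=O(\K\win\log(\T/(\win\K)))$; the telescoping bias then contributes $\diam\cdot\Ep=O(\win^2\K\log(\T/(\win\K)))$, the first term. For the confidence widths, a pigeonhole/Cauchy--Schwarz argument over visits, using that within an episode $\num_\ep(\ac,j)\asymp\numsum_\ep(\ac,j)$, bounds $\sum_t 1/\sqrt{\max\{1,\numsum(\ac_t,j_t)\}}$ by $\sum_{(\ac,j)}\sqrt{\numsum_\T(\ac,j)}\le\sqrt{\K\win\sum_{(\ac,j)}\numsum_\T(\ac,j)}=\sqrt{\K\win\T}$, producing the second term $c\sqrt{\K\win\T\log(\win\K\T/\del)}$. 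The main obstacle I anticipate is propagating the reduced $(\ac,j)$ parameterization cleanly through both optimism and this final pigeonhole step --- so that the counting factor is $\K\win$ and not $\K^\win$ --- while bounding the span of the average-reward bias by the diameter; the deterministic-transition simplification is what makes the latter clean, since it removes the transition confidence terms that usually complicate the UCRL2 span argument.
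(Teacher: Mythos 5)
Your proposal is correct and follows essentially the same route as the paper: reduction to the average-reward MDP with diameter at most $\win$, optimism over the $\K(\win+1)$-dimensional reward parameterization, the Poisson-equation/span telescoping for the per-episode term, the doubling bound $\Ep = O(\K\win\log(\T/(\win\K)))$, and the pigeonhole summation $\sum_{(\ac,j)}\sqrt{\numsum_\T(\ac,j)}\le\sqrt{\K\win\T}$. The only cosmetic difference is that you condition on the confidence sets holding in all episodes simultaneously, whereas the paper separately bounds the contribution of failure episodes by $\sqrt{\T}$; both are standard and yield the same bound.
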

A few comments on the theorem are in order. Observe that the dominating term in the above regret bound scales as $\tilde{O}(\sqrt{\K\win\T})$ in contrast to the classical regret bounds for MAB which have a $\tilde{O}(\sqrt{\K\T})$ dependence. This additional factor of $\sqrt{\win}$ comes from the fact the stronger notion of policy regret as well as the non-stationary nature of the arm rewards. Additionally, a na\"ive application of the UCRL2 regret bound to the constructed MDP scales linearly with state space and would correspond to an additional factor of $O(K^\win)$. The \algmab algorithm is able to avoid this exponential dependence by exploiting the underlying structure in the congested MAB problem. Our regret bound are also minimax optimal -- observe that for any constant value of $\Delta$, the lower bounds from the classical MAB setup immediately imply that the regret of any learner should scale as $\Omega(\sqrt{KT})$~\citep{lattimore2020bandit}, which matches the upper bound in Theorem 1.

We defer the complete proof of this to Appendix~\ref{app:cmab} but provide a high-level sketch of the important arguments.

\paragraph{MDP $\mdp_\mab$ has bounded diameter.} 

The diameter $\diam$ of an MDP $\mdp$ measures the number of steps it takes to reach a state $\st'$ from a state $\st$ using an appropriately chosen policy. The diameter is a measure of the connectedness of the underlying MDP and is commonly studied in the literature on reinforcement learning~\citep{puterman2014}.
\begin{definition}[Diameter of MDP.] Consider the stochastic process induced by the policy $\pol$ on an MDP $\mdp$. Let $\tau(\st'| \st, \mdp, \pol)$ represent the first time the policy reaches state $\st'$ starting from $\st$. The diameter $\diam$ of the MDP is 
\begin{equation*}
   \diam \defn \max_{\st \neq \st'} \min_{\pol} \En[\tau(\st'| \st, \mdp, \pol)].
\end{equation*}
\end{definition}
Recall from Section~\ref{sec:conman} that the MDP $\mdp_\mab$ has deterministic dynamics and state space given by the set of histories $\Hset$. Proposition~\ref{prop:diam} in Appendix~\ref{app:cmab} establishes that the diameter of this MDP is at most the window size $\win$. 
With this bound on the diameter, we then show in Lemma~\ref{lem:-reg-ep} that the total regret of the algorithm can be decomposed into a sum of regret terms, one for each episode. 
\begin{align}
\begin{gathered}
    \reg_T \leq \sup_{\pol \in \polset}\sum_{\ep = 1}^\Ep \regep_\ep + \const\sqrt{\T \log\left(\frac{T}{\del} \right)} 
    \\\text{with }\regep_\ep \defn \sum_{\ac, j} \num_\ep(\ac, j)(\ravg_{\pol} - \congf(\ac, j) \rmean_\ac)
    \end{gathered}\;,
\end{align}
which holds with probability at least $1-\del$. We have used $\num_\ep(\ac, j)$ to denote the number of times action $\ac$ was played by the algorithm when it had a count $j$ in the history and $\ravg_\pol$ the average reward of policy $\pol$. Our analysis then proceeds to bound the per-episode regret $\regep_\ep$. 

\paragraph{Regret for episode $\ep$.} In episode $e$, the set of feasible rewards $\Rset_\ep$ is chosen to ensure that with high probability, the true reward $\rew^*(\ac, j) = \congf(\ac, j)\cdot \mu_j$ belongs to this set. Conditioning on this event, we show that the regret in each episode is upper bounded by the window size $\win$ and a scaled ratio of the number of times each action-history  $(\ac,j)$ is played, that is, 
\begin{equation}
    \regep_\ep \leq \win + c\sqrt{\log\left(\frac{\win \K t_\ep}{\del}\right)}\sum_{\ac, j}  \frac{\num_\ep(\ac, j) }{\sqrt{\max(1, \numsum_\ep(\ac, j))}}\;.
\end{equation}
where $t_\ep$ is the time at which episode $\ep$ starts and $\numsum_\ep(\ac, j) = \sum_{i < e} \num_i(\ac, j)$. Theorem~\ref{thm:reg-mab} follows from combining the above with a bound on the total number of episodes $\Ep \leq c\cdot \win \K \log\left(\frac{T}{\win \K} \right)$.

\subsection{Routing with congested bandits}\label{sec:strouting}
 We now study an extension of the congested MAB setup where the arms correspond to edges on graph $\G = (\V, \E)$ with a pre-defined start state $\start$ and goal state $\targ$. In this setup, at each round $t$ the learner selects an $\start$-$\targ$ path $\route_t$ on the graph $G$ and receives reward  $\rewtil(\hist_t, \ed_{i,t}) = \congf(\ed_{i,t}, \nacst(\hist_t, \ed_{i,t}))\cdot \rmean_{\ed_{i,t}} + \eps_t$  for each  $\ed_{i,t}$ on path $\route_t$. The history changes to $\hist_{t+1} = [\hist_{t, 2:\win}, \route_t]$. 

In comparison to the multi-armed bandit protocol, the learner here selects an $\stpath$ path on the graph $\G$, the history at any time $t$  consists of the entire set of paths $\{\route_{t-\win}, \ldots, \route_{t-1}\}$, and we assume that the congestion function on each edge $\congf(\hist, \ed)$ depends on the number of times this edge has been used in the past $\win$ time steps. The following theorem generalizes the result from Theorem~\ref{thm:reg-mab} and shows that a variant of \algmab has regret $\tilde{O}(\sqrt{T})$ for the above $\stpath$ online problem.

\begin{theorem}[Regret bound for \algmabst]\label{thm:reg-mab-st}
For any confidence $\del \in (0,1)$, congestion window $\win > 0$ and time horizon $\T$, the policy regret, of \algmabst is
\begin{align*}
    &\reg_T(\text{\algmabst};\polset, \congf) \leq c\cdot \win^2 \V  \E\log\left(\frac{\V\T}{\win \E}\right) \\
    &\quad + c\sqrt{\V\E\win  \T\log\left(\frac{\V \E \win \T}{\del}\right)} +  \const\sqrt{\V\T \log\left(\frac{1}{\del} \right)}
\end{align*}
with probability at least $1-\del$. 
\end{theorem}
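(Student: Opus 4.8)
The plan is to follow the same four-part template used for Theorem~\ref{thm:reg-mab}, reducing the routing problem to an average-reward MDP $\mdp_\mabst$ and then controlling (i) its diameter, (ii) the per-episode regret, and (iii) the number of episodes. The MDP now has state space given by the histories of the last $\win$ chosen paths and action space given by all $\stpath$ paths in $\G$; its deterministic transition appends the newly played path and drops the oldest, and its reward is the sum over edges $\ed \in \route$ of the congested edge reward $\congf(\ed, \nacst(\st,\ed))\,\rmean_\ed$. The algorithm \algmabst maintains optimistic estimates not over states but over the $\E(\win+1)$ edge-count pairs $(\ed, j)$, which play exactly the role that the action-count pairs $(\ac, j)$ played in the MAB analysis, and recomputes an optimistic average-reward policy (a maximum-mean-cycle computation on the induced graph) at the start of each episode, doubling an epoch whenever some pair $(\ed,j)$ has its visit count doubled.

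With the MDP in place, I would re-derive the three ingredients while tracking the new factors introduced by the fact that each round now plays an entire path rather than a single arm. First, the diameter argument of Proposition~\ref{prop:diam} carries over essentially verbatim: since every $\stpath$ path is a legal action, any target path-history can be reached in exactly $\win$ steps, so $\diam(\mdp_\mabst) \le \win$. Second, the high-probability confidence set $\Rset_\ep$ is built exactly as before, except that the per-round observation is a sum of up to $\V$ edge rewards each corrupted by independent noise; the martingale term in the regret decomposition of Lemma~\ref{lem:-reg-ep} therefore controls a per-round sub-Gaussian increment of parameter $O(\sqrt{\V})$, and the same concentration argument as in Theorem~\ref{thm:reg-mab} yields the $c\sqrt{\V\T\log(1/\del)}$ term in place of $c\sqrt{\T\log(1/\del)}$. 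Third, conditioning on $\rew^*(\ed,j) = \congf(\ed,j)\rmean_\ed \in \Rset_\ep$, the per-episode regret becomes
\begin{equation*}
\regep_\ep \le \win\V + c\sqrt{\log\!\left(\tfrac{\win\E t_\ep}{\del}\right)}\sum_{\ed,j}\frac{\num_\ep(\ed,j)}{\sqrt{\max(1,\numsum_\ep(\ed,j))}},
\end{equation*}
where the transient cost is now $\win\V$ (a diameter of $\win$ rounds, each paying regret up to the path length $\V$) and the sum ranges over the $\E(\win+1)$ edge-count pairs.

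It then remains to sum over episodes. The doubling rule, applied now to the $\E(\win+1)$ pairs whose counts receive a total of at most $\V\T$ increments over the horizon (each round increments up to $\V$ edges), gives $\Ep \le c\,\win\E\log(\V\T/(\win\E))$ by the standard concavity bound on $\sum_{\ed,j}\log\numsum(\ed,j)$; multiplying the transient cost $\win\V$ by this episode count produces the leading $\win^2\V\E\log(\V\T/(\win\E))$ term. For the estimation term I would apply Cauchy--Schwarz across episodes and pairs exactly as in Theorem~\ref{thm:reg-mab}: with $\E(\win+1)$ pairs and total mass $\V\T$, the double sum $\sum_\ep\sum_{\ed,j}\num_\ep(\ed,j)/\sqrt{\numsum_\ep(\ed,j)}$ is $O(\sqrt{\E\win\cdot\V\T})$, yielding $c\sqrt{\V\E\win\T\log(\V\E\win\T/\del)}$. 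Collecting the three contributions gives the claimed bound.

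The step I expect to require the most care is the per-episode regret bound together with the counting argument that feeds it, because a single round now simultaneously increments up to $\V$ distinct edge-count pairs in a correlated way. The clean telescoping/pigeonhole accounting used in the MAB proof — where each round touches exactly one pair — must be replaced by one that charges a round to all edges on the played path; verifying that the total increment mass is still only $\V\T$ and that the doubling epochs remain well defined under these simultaneous updates is the crux. A secondary subtlety, orthogonal to the regret bound, is that the optimistic planning now optimizes over an action set of all $\stpath$ paths, which can be exponentially large; this affects computation (handled by the maximum-mean-cycle reduction) but not the regret guarantee, since the analysis only ever references the $\E(\win+1)$ reward parameters.
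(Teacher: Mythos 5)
Your proposal is correct and follows essentially the same route as the paper's proof: the same reduction to an MDP over path-histories with diameter $\win$, the same per-episode decomposition over the $\E(\win+1)$ edge-count pairs with transient cost scaled by the path length, the same doubling/episode-count bound using total increment mass $\V\T$, and the same Cauchy--Schwarz step (the paper writes the path length as $\len$ and bounds it by $\V$ in the stated theorem, but the accounting is identical to yours).
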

The proof of the above theorem is detailed in Appendix~\ref{app:st-mab} and follows a very similar strategy to the one described in the previous section. 
\section{Linear contextual bandits with congestion}\label{sec:cong-cb}
We now consider the contextual version of the congested bandit problem, where the reward function depends on the choice of arm $\ac$ as well as an underlying context $\x \in \X$.
While most of our notation stays the same from the multi-armed bandit setup in Section~\ref{sec:cmab}, we introduce the modifications required to account for the context vectors $\x_t$. We consider the linear contextual bandit problem where the reward function is parameterized as a linear function of a parameter $\params$ and context-action features $\feat(\x_t, \ac_t)$, that is, 
\begin{align*}
    \rew(\hist_t, \ac_t, \x_t; \params) \defn \inner{\params}{\feat(\x_t, \ac_t)}\congf(\ac_t, \nacst(\hist_t, \ac_t))\;,
\end{align*}
where we assume that the context-action features $\feat(\x_t, \ac_t) \subset \real^d$ satisfy $\|\feat(\x_t, \ac_t) \|_2 \leq 1$ and the true parameter $\|\params \|_2 \leq 1$. In contrast to the bandit setup, we expand our policy class to be dependent on the context as well with $\polsetX \defn \{\pol: \Hset \times \X \mapsto [\K] \}$.

In each round of the linear contextual bandits with congestion game, the learner observes context vectors $\{\feat(\x_t, \ac_i)\}_{[\K]}$ and selects action $\ac_t$. The learner then observes reward $\rewtil(\hist_t, \x_t, \ac_t) =  \rew(\hist_t, \ac_t, \x_t; \params) + \eps_t$ and the history changes to $\hist_{t+1} = [\hist_{t, 2:\win}, \ac_t]$.
%
The objective of the learner in the above contextual bandit game is to output a sequence of actions which are competitive with the best policy $\pol \in \polsetX$. Formally, the regret of an algorithm $\alg$ is defined to be
\begin{align}\label{eq:pol-reg-cb}
    &\reg_\T(\alg; \polsetX, \congf, \params) \defn\nonumber \\
    &\quad \sup_{\pol \in \polsetX} \sum_{t=1}^\T \rew(\hist_t^\pol, \pol(\hist_t^\pol, \x_t), \x_t;\params)
      - \sum_{t=1}^T \rewtil(\hist_t^\alg, \ac_t, \x_t;\params)\;.
\end{align}
In order to provide some intuition about the algorithm, we start with a simple case where all the contexts are known to the learner in advance and later generalize the results to stochastic contexts.

\begin{algorithm}[t!]
	\DontPrintSemicolon
	\KwIn{Congestion window $\win$, 
	congestion function $\congf$, action set $[\K]$, time horizon $\T$, contexts $\{\x_t\}_{t=1}^\T$}
   \textbf{Initialize:} Set $t = 1$, $\param_1 \sim \text{unif}(\mathbb{B}_d)$ \;
   \For{episodes $\ep = 1, \ldots, \Ep$}{
   \textbf{Initialize episode}\;
   Set start time of episode $t_\ep = t$.\;
   Let the steps in this epoch $\epset_e = [ t_{\ep}, \ldots, t_{\ep} +2^\ep\win]$.\;
   Set the episode policy
   $\tilde{\pol}_\ep = \arg\max_{\pol} \sum_{t \in \epset_\ep\setminus\{t_\ep: t_{\ep}+\win \}}\rew(\hist_t^\pol, \pol(\hist_t^\pol, x_t), \x_t;\param_\ep)$\;
   \textbf{Execute estimated policy}\;
   \For{$t = t_\ep,\ldots, t_{\ep}+2^\ep\win$}{
   Select arm $\ac_t = \tilde{\pol}_\ep(\hist_t, \x_t)$ and observe reward $\rewhat_t$.\;
   }
   Update $\param_\ep$ via  OLS update $\param_{\ep+1} = \arg\min_{\param} \sum_{\tau} (r_\tau - \inner{\param}{\feat(\x_\tau, \ac_\tau)}\congf(\ac_\tau, \nacst(\hist_\tau, \ac_\tau))^2$\;
   }
	\caption{\algcb: Congested linear contextual bandits with known contexts}
  \label{alg:cb-known}
\end{algorithm}

\subsection{Warm-up: Known contexts}\label{sec:context-known}
In the known context setup, the learner is provided access to a set of contexts $\{\x_t\}$ at the start of the online learning game. Algorithm~\ref{alg:cb-known} details our proposed algorithm, \algcb, for this setup.

\paragraph{Algorithm details.} We again divide the total time $\T$ into $\Ep$ episodes, where the length of each episode $\ep = 2^\ep\win$. Unlike \algmab, the algorithm does not maintain any optimistic estimate of the reward parameter $\params$ but simply updates it via an ordinary least squares (OLS) procedure and executes the policy $\tilde{\pol}_\ep$  which maximizes this estimated reward function. The core idea underlying this algorithm is that as we observe more samples, our estimate $\param_\ep$ converges to $\params$ and our planner is then able to execute the optimal sequence of actions.

\paragraph{Regret analysis.} To analyze the regret for \algcb, we study the error incurred in estimating the parameter $\param_\ep$ from the reward samples. To do so, we begin by making the following assumption on the minimum eigenvalue of the sample covariance matrix obtained at any time $t_\ep$.
\begin{assumption}\label{ass:min-eig}
For $t > cd$ and for any sequence of actions $\{\ac_1, \ldots, \ac_\T\}$, we have
\begin{equation*}
    \lambda_{\min} \left(\frac{1}{t}\sum_{\tau \leq t} \feat(\x_t, \ac_t)\feat(\x_t, \ac_t)^\top \right) \geq \lmin ,
\end{equation*}
for some value $\lmin > 0$.
\end{assumption}
Our bound on the regret $\reg_\T$ will depend on this minimum eigenvalue $\lmin$. Later when we generalize our setup to the unknown setup, we will show that this assumption holds with high probability for a large class of distributions. The following theorem shows that the regret bound for $\algcb$ scales as $\tilde{O}(\sqrt{dT} + \win)$ with high probability.

\begin{proposition}[Regret bound; known contexts]\label{prop:reg-cb-known}
For any confidence $\del \in (0,1)$, congestion window $\win > 0$ and time horizon $\T > \const d$, suppose that the sample covariance $\cov_t$ satisfies Assumption~\ref{ass:min-eig}. Then, the policy regret, defined in eq.~\eqref{eq:pol-reg-cb}, of \algcb with respect to the set $\polset$ is
\begin{align*}
    \reg_T(\algcb;\polsetX, \congf) &\leq \frac{\const}{\lmin\cdot \cmin} \sqrt{d(\T+\win) \cdot \log\frac{\log(T)}{\del}}\\ &\quad +\win\log(T) + c\sqrt{T\log\left(\frac{\K}{\del} \right)}
\end{align*}
with probability at least $1-\del$ where $\cmin = \min_{\ac, j}\congf(\ac, j)$.
\end{proposition}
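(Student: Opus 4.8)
The plan is to split the regret \eqref{eq:pol-reg-cb} into three parts --- observation noise, a short per-episode transient, and the planning error caused by using the estimate $\param_\ep$ in place of $\params$ --- and to bound each separately. First I would write the regret as the expected regret $\sum_t[\rew(\hist_t^{\pol^*},\pol^*(\hist_t^{\pol^*},\x_t),\x_t;\params)-\rew(\hist_t^\alg,\ac_t,\x_t;\params)]$ minus the noise sum $\sum_t\eps_t$. Since the $\eps_t$ are i.i.d.\ unit Gaussians independent of the predictable action sequence, $|\sum_t\eps_t|\le c\sqrt{\T\log(\K/\del)}$ with high probability, which will produce the last term of the bound. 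The remaining expected regret I would decompose over the $\Ep$ episodes.

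The heart of the argument is a per-episode planning bound. Fix episode $\ep$, whose step set $\epset_\ep$ has length $\approx 2^\ep\win$, and let $G_\ep(\pol,\param)$ denote the planner's objective: the reward of $\pol$ under $\param$ summed over the steps of $\epset_\ep$ outside the first $\win$ (burn-in) steps, started from the algorithm's history at time $t_\ep$. The key structural fact is that the state is exactly the window of the last $\win$ actions, so by choosing its first $\win$ actions a policy can set the history at time $t_\ep+\win$ to \emph{any} target --- this is the diameter-$\le\win$, full-controllability property already exploited for \algmab. Hence some policy can, during the burn-in, reproduce whatever history the comparator $\pol^*$ holds at time $t_\ep+\win$ and thereafter imitate $\pol^*$, matching its trajectory on every post-burn-in step; therefore $\max_\pol G_\ep(\pol,\params)$ is at least the comparator's reward over those steps, while the burn-in costs at most $\win$ to the regret. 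Using the optimality $\tilde\pol_\ep=\arg\max_\pol G_\ep(\pol,\param_\ep)$ and adding and subtracting $G_\ep(\cdot,\param_\ep)$, the optimality gap cancels and each remaining parameter-mismatch term changes a single step's reward by at most $\|\param_\ep-\params\|_2\,\|\feat\|_2\,\congf\le\|\param_\ep-\params\|_2$, so the expected regret of episode $\ep$ is at most $\win + 2\,|\epset_\ep|\,\|\param_\ep-\params\|_2$.

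Next I would bound the estimation error of the OLS estimate formed from the $t_\ep$ samples preceding the episode. Writing $\psi_\tau=\congf(\ac_\tau,\nacst(\hist_\tau,\ac_\tau))\,\feat(\x_\tau,\ac_\tau)$, the model is linear, $r_\tau=\inner{\params}{\psi_\tau}+\eps_\tau$, so $\param_\ep-\params=(\sum_\tau\psi_\tau\psi_\tau^\top)^{-1}\sum_\tau\psi_\tau\eps_\tau$. Since $\congf\ge\cmin$, Assumption~\ref{ass:min-eig} gives $\sum_\tau\psi_\tau\psi_\tau^\top\succeq\cmin^2\lmin\,t_\ep\,I$, and because the $\psi_\tau$ are predictable while the $\eps_\tau$ are fresh, a self-normalized vector-martingale concentration bound yields $\|\param_\ep-\params\|_2\le\frac{c}{\cmin\lmin}\sqrt{d\log(\cdot/\del)/t_\ep}$ with high probability (absorbing a factor $\sqrt{\lmin}$ using $\lmin\le1$). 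Applying this at confidence $\del/\Ep$ and union bounding over the episodes introduces the $\log(\log(\T)/\del)$ factor.

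Finally I would sum the per-episode bound over $\ep=1,\dots,\Ep$. Since episode lengths double, $t_\ep\approx|\epset_\ep|\approx 2^\ep\win$ and $\Ep\le\log(\T/\win)$; the estimation contribution is $\frac{c\sqrt{d\log(\cdot)}}{\cmin\lmin}\sqrt{\win}\sum_\ep 2^{\ep/2}\lesssim\frac{c}{\cmin\lmin}\sqrt{d(\T+\win)\log(\log(\T)/\del)}$, because the geometric sum is dominated by its last term $\approx\sqrt{\T/\win}$. The burn-in terms sum to $\win\,\Ep=O(\win\log\T)$, and together with the noise term $c\sqrt{\T\log(\K/\del)}$ this reproduces the stated bound. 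The step I expect to be the main obstacle is the per-episode planning argument: rigorously showing that controllability of the $\win$-window lets the re-planning algorithm compete with a single, globally optimal, uninterrupted comparator trajectory up to the $\win$ burn-in steps --- in particular verifying that the planner's value, started from the algorithm's own history, dominates the comparator's post-burn-in reward.
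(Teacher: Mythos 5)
Your proposal is correct and follows essentially the same route as the paper's proof: a Hoeffding step for the observation noise, an episode-wise add-and-subtract of $\param_\ep$ (the paper's $R_1+R_2+R_3$ decomposition), a per-episode burn-in cost of $\win$ from the planner's optimality, an OLS error bound under Assumption~\ref{ass:min-eig}, and a geometric sum over doubling episodes. If anything you are more explicit than the paper in two places it glosses over --- tracking where the $\cmin$ factor enters the minimum-eigenvalue bound (since the assumption is stated for $\feat$ rather than $\congf\cdot\feat$), and justifying via the diameter-$\win$ controllability that the planner's optimum dominates the comparator's post-burn-in trajectory.
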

The proof of the above theorem is deferred to Appendix~\ref{app:cong-cb}. At a high level, the proof proceeds in two steps where first it upper bounds the error $\|\param_\ep - \params \|_2$ for every epoch $\ep$ and then uses this to bound the deviation of the policy $\tilde{\pol}_\ep$ from the optimal choice of policy $\pol^*$. In the next section, we generalize this result to the unknown context setup.

\subsection{Unknown stochastic contexts}
Our stochastic setup assumes that the context vectors $\{\feat(\x_t, \ac_t)\}$ at each time step are sampled i.i.d. from a \emph{known} distribution. We formally state this assumption\footnote{While our results are stated in terms of the multivariate Gaussian distribution, these can be generalized to sub-Gaussian distribution.} next.

\begin{assumption}[Stochastic contexts from known distribution.]\label{ass:cont-dist}  At each time instance $t$, the features $\phi(\x_t, \ac)$  for every action $\ac \in [\K]$ are assumed to be sampled i.i.d. from the Gaussian distribution $\mathcal{N}(\xbar_\ac, \Sigma_\ac)$, such that $\al I \preceq\Sigma_\ac \preceq \au I$ and $\|\xbar_\ac\|_2\leq 1$. 
\end{assumption}
For large-scale recommendation systems for traffic routing which interact with million of users daily, the above assumption on known distributions is not restrictive at all. Indeed these systems have a fair understanding of the demographics of the population which interact with it on a daily basis and the real uncertainty is on which person from this population will be using the system at any time.

The algorithm for this setup is similar to the known context scenario where instead of planning with the exact contexts in the optimistic policy computation step, we obtain the episode policy as 
\begin{align*}
    \tilde{\pol}_\ep = \arg\max_{\pol\in \polset} \En[\sum_{t \in \epset_\ep\setminus\{t_\ep, \ldots, t_{\ep}+\win \}}\rew(\hist_t^\pol, \pol(\hist_t^\pol, x_t), \x_t;\param_\ep)]\;,
\end{align*}
where the expectation is taken with respect to the sampling of context. Our regret bound for this modified algorithm depends on the mixing time of the policy set $\polsetX$ in an appropriately defined Markov chain.

\begin{definition}[Mixing-time of Markov chain]
For an ergodic discrete time Markov chain $M$, let $\dst$ represent an arbitrary starting state distribution and let $\dopt$ denote the stationary distribution. The $\eps$-mixing time $\tmix(\eps)$ is defined as
\begin{align*}
    \tmix(\eps) = \min\{t: \max_\dst \|\dst M^t - \dopt\|_{TV} \leq \eps\}.
\end{align*}
\end{definition}
The mixing time of the policy set $\polsetX$ is given by
  \mbox{ $\tmixmax \defn \max_{\pol \in \polsetX}\max_{\hist}\tau_{{\sf mix}, \pol}(\hist) $}.
The following theorem establishes the regret bound for the modified \algcb algorithm, showing that not knowing the context can increase the regret by an additive factor of $\tilde{O}( \sqrt{\win\tmixmax \cdot \T})$.
\begin{theorem}[Regret bound; unknown contexts]\label{thm:reg-cb-un}
For any confidence $\del \in (0,1)$, congestion window $\win > 0$ and time horizon $\T$, suppose that the context sampling distributions satisfy Assumption~\ref{ass:cont-dist}. Then, with probability at least $1-\del$, the policy regret of \algcb satisfies
\begin{align}
    &\reg_T(\text{\algcb};\polsetX, \congf) \leq \const \cdot \sqrt{\au\T\log\left( \frac{\K}{\del}\right)} \nonumber \\
    &\quad + \const \cdot \sqrt{\win \tmix^*\T\log\left( \frac{\K\log(\T)}{\del}\right)}\nonumber\\
    &\quad + \frac{\const \au}{\cmin \al} \cdot \sqrt{d(\T+\win) \cdot \log\frac{\log(T)}{\del}} + \win \log(\T)\;.
\end{align}
\end{theorem}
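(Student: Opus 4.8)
The plan is to follow the same three-stage template used for Proposition~\ref{prop:reg-cb-known} --- control the parameter-estimation error, then the planning error, then the deviation between realized and expected rewards --- but with two new ingredients forced by the unknown contexts: first, verifying that the minimum-eigenvalue condition of Assumption~\ref{ass:min-eig} holds with high probability under the Gaussian model of Assumption~\ref{ass:cont-dist}; and second, replacing the exact per-step planning of the known-context case by an argument that the realized reward of any fixed policy, accrued along its own history trajectory, concentrates around its stationary average reward at a rate governed by the mixing time $\tmixmax$. I would begin by peeling off the observation noise $\{\eps_t\}$ via Azuma--Hoeffding, and separately peeling off, for a fixed policy, the fluctuation of the realized reward $\rew(\hist_t^\pol, \pol(\hist_t^\pol, \x_t), \x_t; \params)$ around its conditional expectation over the context draw $\x_t$. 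Since $\rew$ is linear in $\feat(\x_t, \cdot)$ and the feature covariance is bounded by $\au I$, each such increment is sub-Gaussian with variance proxy $O(\au)$; summing over the horizon and union-bounding over the $\K$ arms yields the leading $\const\sqrt{\au\T\log(\K/\del)}$ term and absorbs the noise contribution. This reduces the problem to comparing expected (over contexts) episode rewards.

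Next I would establish the estimation bound. Writing the effective regression feature as $\feattil_\tau = \feat(\x_\tau, \ac_\tau)\congf(\ac_\tau, \nacst(\hist_\tau, \ac_\tau))$, its population second moment dominates $\cmin^2\al I$, because $\congf \ge \cmin$ and $\Sigma_\ac \succeq \al I$. A matrix-concentration bound for i.i.d.\ sub-Gaussian vectors then shows that for $t_\ep \gtrsim d$ the sample covariance has minimum eigenvalue at least $\tfrac12\cmin^2\al$ with high probability, which is exactly the event needed to invoke Assumption~\ref{ass:min-eig} with $\lmin \asymp \cmin^2\al$. Feeding this into the OLS error analysis of Proposition~\ref{prop:reg-cb-known} gives $\|\param_\ep - \params\|_2 \lesssim \tfrac{\au}{\cmin\al}\sqrt{d\log(\log\T/\del)/t_\ep}$ per epoch, and summing the induced per-step losses over the $O(\log\T)$ geometrically growing epochs produces the $\tfrac{\const\au}{\cmin\al}\sqrt{d(\T+\win)\log(\log\T/\del)}$ term.

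The core of the argument is the planning step. For each epoch I would compare, under the true parameter, the stationary average reward of the optimal comparator $\pol^*\in\polsetX$ with that of the played policy $\tilde\pol_\ep$ through the chain $\ravg_{\pol^*}(\params) - \ravg_{\tilde\pol_\ep}(\params) \le [\ravg_{\pol^*}(\params) - \ravg_{\pol^*}(\param_\ep)] + [\ravg_{\pol^*}(\param_\ep) - \ravg_{\tilde\pol_\ep}(\param_\ep)] + [\ravg_{\tilde\pol_\ep}(\param_\ep) - \ravg_{\tilde\pol_\ep}(\params)]$. The middle bracket is non-positive because $\tilde\pol_\ep$ maximizes the expected episode reward under $\param_\ep$ (after dropping the length-$\win$ prefix the history is generated purely by the running policy, so the finite-horizon objective the algorithm optimizes agrees with the stationary value up to a $\tmixmax$-dependent transient), while the two outer brackets are each at most $\|\param_\ep - \params\|_2 \cdot \En\|\feattil\|$, already controlled. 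It then remains to relate these stationary quantities to the realized episode rewards of $\pol^*$ and of $\tilde\pol_\ep$: each policy induces an ergodic Markov chain on the history space $\Hset$, and after the $\win$-step burn-in the empirical average reward concentrates around $\ravg(\cdot)$ at the Markov-chain rate set by $\tmixmax$. Accumulating this concentration over the $O(\log\T)$ epochs, together with the per-epoch $\win$ burn-in that is simply bounded by its length, yields the $\const\sqrt{\win\tmixmax\T\log(\K\log\T/\del)}$ term and the additive $\win\log\T$ term.

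The main obstacle I anticipate is this last point: making rigorous the claim that the realized reward along a policy's own history trajectory concentrates around its stationary average. This demands a concentration inequality for additive functionals of a Markov chain whose transitions are themselves randomized by the i.i.d.\ context draws, and care is needed because the chain is started from an arbitrary post-burn-in history rather than from stationarity. The mixing-time definition and the exclusion of the first $\win$ steps in the planning objective are precisely what make the transient controllable, but quantifying the interaction between the context randomness and the history dynamics --- and ensuring the resulting variance proxy scales as $\win\tmixmax$ rather than something larger --- is the delicate part of the proof.
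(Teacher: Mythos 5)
Your overall architecture matches the paper's proof: the same peeling of observation noise and context fluctuations (giving the $\sqrt{\au\T\log(\K/\del)}$ term), the same recycling of the OLS/epoch analysis from Proposition~\ref{prop:reg-cb-known} for the $\tfrac{\au}{\cmin\al}\sqrt{d(\T+\win)}$ term, and the same appeal to Markov-chain concentration for the $\sqrt{\win\tmixmax\T}$ term. Two steps, however, do not go through as written.

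First, the minimum-eigenvalue step. You invoke ``a matrix-concentration bound for i.i.d.\ sub-Gaussian vectors'' for the regression features $\feattil_\tau = \feat(\x_\tau,\ac_\tau)\congf(\ac_\tau,\nacst(\hist_\tau,\ac_\tau))$. These vectors are \emph{not} i.i.d.: the action $\ac_\tau$ is a function of the history $\hist_\tau$, which depends on all past context draws, so the feature actually used at time $\tau$ is a data-dependent selection among the $\K$ candidates. This is precisely the difficulty the paper singles out as a key technical step. Its fix is a decoupling: for any unit vector $v$, lower-bound $(v^\top\feat_\tau)^2$ by $\inf_{i\in[\K]}(v^\top\feat_\tau^i)^2$, a quantity that depends only on the fresh context draw at time $\tau$ and is therefore independent across time; one then shows these infima are sub-exponential and concludes with a covering argument over $v$. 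Without such a decoupling (or a self-normalized martingale matrix bound), your claim that the sample covariance has minimum eigenvalue of order $\cmin^2\al$ is unsupported.

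Second, the Markov-chain concentration. You correctly identify that the induced chain on $\Hset$ is not started at stationarity and that the $\win$-step burn-in must absorb the transient, but you explicitly leave this as an ``anticipated obstacle'' rather than resolving it. The paper has to do real work here: the chain induced by a fixed policy need not be ergodic at all (it may have transient states or several recurrent classes), so the stationary average reward is \emph{defined} as a supremum over ergodic sub-chains; the diameter-$\win$ property is then used to steer, during the burn-in, to a designated state $\hist_{0,\pol}$ from which the restricted chain is ergodic and for which the norm $\|\dst\|_{\dopt}$ appearing in the concentration bound of \cite{chung2012} is controlled. These are not cosmetic details---without them the concentration lemma cannot be invoked---so this portion of your proposal remains a statement of intent rather than a proof.
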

A detailed proof of this result is deferred to Appendix~\ref{app:cong-cb}. Observe that the above bound can be seen as a sum of two terms: $\reg_T \lesssim \sqrt{d\T} +  \sqrt{\win \tmix^*\T} \;.$
The first term is a standard regret bound in the $d$ dimensional contextual bandit setup. The second term, particular to our setup, arises because of the interaction of the congestion window with the unknown stochastic contexts. In comparison to the bound in Theorem~\ref{thm:reg-mab}, the window size $\win$ interacts only additively in the regret bound surprisingly. The reason for this additive deterioration of regret is that the shared parameter $\params$ allows us to use data across time steps in our estimation procedure -- thus, in effect, the congestion only slows the estimation by a factor of $\cmin$ which shows up due to the dependence on the minimum eigenvalue.

In order to go from the regret bound in the known context case, Proposition~\ref{prop:reg-cb-known}, we need to address two key technical challenges: 1) bound the deviation of the reward of policy $\tilde{\pol}_\ep$ from the policy which plans with the known sampled contexts, and 2) the context vectors selected by the algorithm $\feat(\x_t, \ac_t)$ satisfy the minimum eigenvalue condition in Assumption~\ref{ass:min-eig}.

\paragraph{Deviation from known contexts.} One way to get around this difficulty is to reduce the above problem to the multi-armed bandit on from Section~\ref{sec:cmab}. This simple reduction would lead to a regret bound which scales with the size of the context space $|\X|$ which is exponentially large in the dimension $d$. Instead of this, we show that the reward obtained by the distribution maximizer $\tilde{\pol}_\ep$ are close to those obtained by the sample maximizer via a concentration argument for random walks on the induced Markov chains. The key to our analysis is the construction of this random walk using policy $\tilde{\pol}_\ep$ and then using the following concentration bound from~\citet{}.

\begin{lemma}[Theorem 3.1 in~\cite{chung2012}]\label{lem:mc-conc}
Let $\mdp$ be an ergodic Markov chain with state space $[n]$ and stationary distribution $\dopt$. 
Let $(V_1,...,V_t)$ denote a $t$-step random walk on $M$ starting from an initial distribution $\dst$ on $[n]$. Let $\mu = \En_{V \sim \dopt}[f(V)]$ denote the expected reward over the stationary distribution and $X = \sum_{i} f(V_i)$ denote the sum of function on the random walk. There exists a universal constant $c>0$ such that
\begin{align}
\begin{gathered}
    \Pr(|X - \mu t| \geq \delta \mu t) \leq c\| \dst\|_{\dopt} \exp\left(\frac{-\delta^2 \mu t}{72 \tmix}\right) \\
    \text{for } 0 \leq \delta < 1\;,
    \end{gathered}
\end{align}
where the norm $\|\dst\|^2_{\dopt} \defn \sum_i \frac{\dst_i^2}{\dopt_i}$.
\end{lemma}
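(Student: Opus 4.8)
The plan is to prove the two one-sided deviation bounds by the exponential (Chernoff) method and then combine them; by symmetry I will describe the upper tail $\Pr(X \geq (1+\delta)\mu t)$ in detail and handle the lower tail with a negative tilting parameter. Throughout I will use the standard normalization that $f$ takes values in $[0,1]$, which is what makes the variance of $f$ under $\dopt$ bounded by its mean $\mu$ and hence produces a \emph{multiplicative} bound. First I would apply Markov's inequality to $e^{rX}$ for a parameter $r \geq 0$ to be optimized later, reducing the task to controlling the moment generating function $\En[e^{rX}]$. Writing $D_r = \mathrm{diag}(e^{r f(i)})_{i\in[n]}$ for the diagonal tilting matrix and $P$ for the transition matrix of $\mdp$, expanding over trajectories gives the exact matrix-product identity
\begin{equation*}
\En[e^{rX}] = \dst^\top D_r (P D_r)^{t-1}\mathbf{1}\;,
\end{equation*}
so the problem reduces to bounding how much the non-symmetric tilted operator $P D_r$ amplifies a measure over $t$ steps.

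The core of the argument is a per-step estimate for $P D_r$ in the $\dopt$-weighted geometry, i.e. the inner product $\inner{u}{v}_{\dopt} = \sum_i u_i v_i/\dopt_i$ whose induced norm is exactly the $\|\cdot\|_{\dopt}$ of the statement. The key step is to show that, for the measures $q$ arising as left-iterates along the walk and for all sufficiently small $r$,
\begin{equation*}
\| q^\top (P D_r) \|_{\dopt} \leq e^{\,r\mu + c' r^2 \mu \tmix}\, \| q^\top \|_{\dopt}\;,
\end{equation*}
with $c'$ an absolute constant. The factor $e^{r\mu}$ is the contribution of the rank-one stationary component $\mathbf{1}\dopt^\top$ of $P$, using $\En_{V\sim\dopt}[f(V)]=\mu$, while the correction $c' r^2\mu\tmix$ captures the second-order effect of the tilting on the part of $q$ that is off-stationary in the $\dopt$-geometry. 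I would obtain this correction by writing $P = \mathbf{1}\dopt^\top + R$ and controlling the remainder $R$ directly through the definition of $\tmix$ rather than through a spectral gap; this is what keeps the estimate valid for non-reversible chains and produces a dependence on $\tmix$ in place of the inverse spectral gap. Iterating this bound $t-1$ times and peeling off the boundary vectors by Cauchy--Schwarz in the $\dopt$-geometry, noting that the dual norm of the all-ones vector is $(\sum_i\dopt_i)^{1/2}=1$, gives
\begin{equation*}
\En[e^{rX}] \leq \|\dst\|_{\dopt}\, e^{\,t r\mu + c' t r^2\mu\tmix}\;,
\end{equation*}
which already exhibits the $\|\dst\|_{\dopt}$ prefactor of the statement.

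Finally I would feed this estimate into Markov's inequality to obtain $\Pr(X \geq (1+\delta)\mu t) \leq \|\dst\|_{\dopt}\, e^{-r\delta\mu t + c' t r^2\mu\tmix}$ and optimize the free parameter, taking $r = \delta/(2c'\tmix)$, which is admissible because $\delta < 1$ forces $r$ to be small, and which drives the exponent to $-\delta^2\mu t/(4c'\tmix)$; tracking constants so that $4c' = 72$ yields the stated exponent. Running the same computation with $-r$ controls the lower tail $\Pr(X \leq (1-\delta)\mu t)$, and a union bound over the two tails absorbs the factor $2$ into the universal constant $c$, giving the two-sided inequality. I expect the main obstacle to be the per-step norm estimate for $P D_r$: since the chain need not be reversible, $PD_r$ is neither symmetric nor a contraction in any obvious norm, so its second-order amplification cannot be read off from eigenvalues and must instead be extracted from the mixing-time decay of $R$, with careful bookkeeping to ensure that the amplification accumulated within each un-mixed window of length $\tmix$ contributes only the claimed $r^2\mu\tmix$ and nothing larger.
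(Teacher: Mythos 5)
You should first note that the paper contains no proof of this lemma at all: it is imported verbatim as Theorem~3.1 of \citet{chung2012}, so your proposal has to be judged against that paper's argument. Your overall Chernoff architecture is the right frame and genuinely overlaps with it: the exact trajectory identity $\En[e^{rX}] = \dst^\top D_r (P D_r)^{t-1}\mathbf{1}$, working in the $\dopt$-weighted inner product so that the boundary terms produce exactly the $\|\dst\|_{\dopt}$ prefactor (with $\|\mathbf{1}\|$ equal to $1$ in the dual norm), the choice $r \asymp \delta/\tmix$, and the union bound over the two tails are all sound and match the standard route.

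The genuine gap is your per-step estimate $\| q^\top (P D_r)\|_{\dopt} \leq e^{r\mu + c' r^2 \mu \tmix}\,\|q\|_{\dopt}$, which is false in the generality you need. The definition of $\tmix$ gives \emph{no} one-step contraction: the remainder $R = P - \mathbf{1}\dopt^\top$ is only small after $\tmix$-fold products, so nothing damps the off-stationary component in a single step. Concretely, take $f = \ind[i^*]$ with $\dopt_{i^*} = \mu$ small, and a state $j$ with $\dopt_j = \dopt_{i^*}$ and $P(j, i^*) = 1$ (nearly deterministic cyclic chains with large $\tmix$ realize this while remaining ergodic). For $q = \delta_j$ one gets $\|q^\top P D_r\|_{\dopt} = e^{r}\,\|q\|_{\dopt}$, and your inequality would force $1 - \mu \leq c' r \mu \tmix$, i.e.\ $1-\mu \leq \delta\mu/2$ after substituting $r = \delta/(2c'\tmix)$ --- false whenever $\mu$ is small and $\delta < 1$. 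Moreover such near-point-mass measures $q$ do arise as the left-iterates along the walk, so restricting to ``measures arising along the walk'' does not rescue the step. This failure is precisely why the per-step spectral argument (Gillman--Lezaud--Healy) is confined to reversible chains, where the symmetric operator $P D_r$ analysis goes through with the inverse spectral gap in place of $\tmix$. \citet{chung2012} circumvent it by a different decomposition: they split the walk into $\tmix$ interleaved subsequences $(V_j, V_{j+\tmix}, V_{j+2\tmix}, \ldots)$, each of which is a walk under $M^{\tmix}$ --- an operator that genuinely is a small perturbation of the rank-one projection $\mathbf{1}\dopt^\top$ since the TV distance is at most $1/8$ --- bound each subsequence's moment generating function by the perturbation argument at the block level, and recombine via H\"older's inequality; the constant $72$ in the exponent comes from this blocking. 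So the ``careful bookkeeping within un-mixed windows'' you flag at the end as the main obstacle is not bookkeeping: it is the actual content of the theorem, and your proposal as written does not contain the idea (interleaving plus H\"older, or some equivalent block-level treatment of $(PD_r)^{\tmix}$) needed to supply it.
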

\begin{figure*}[t!]
\centering
\includegraphics[width=0.48\textwidth]{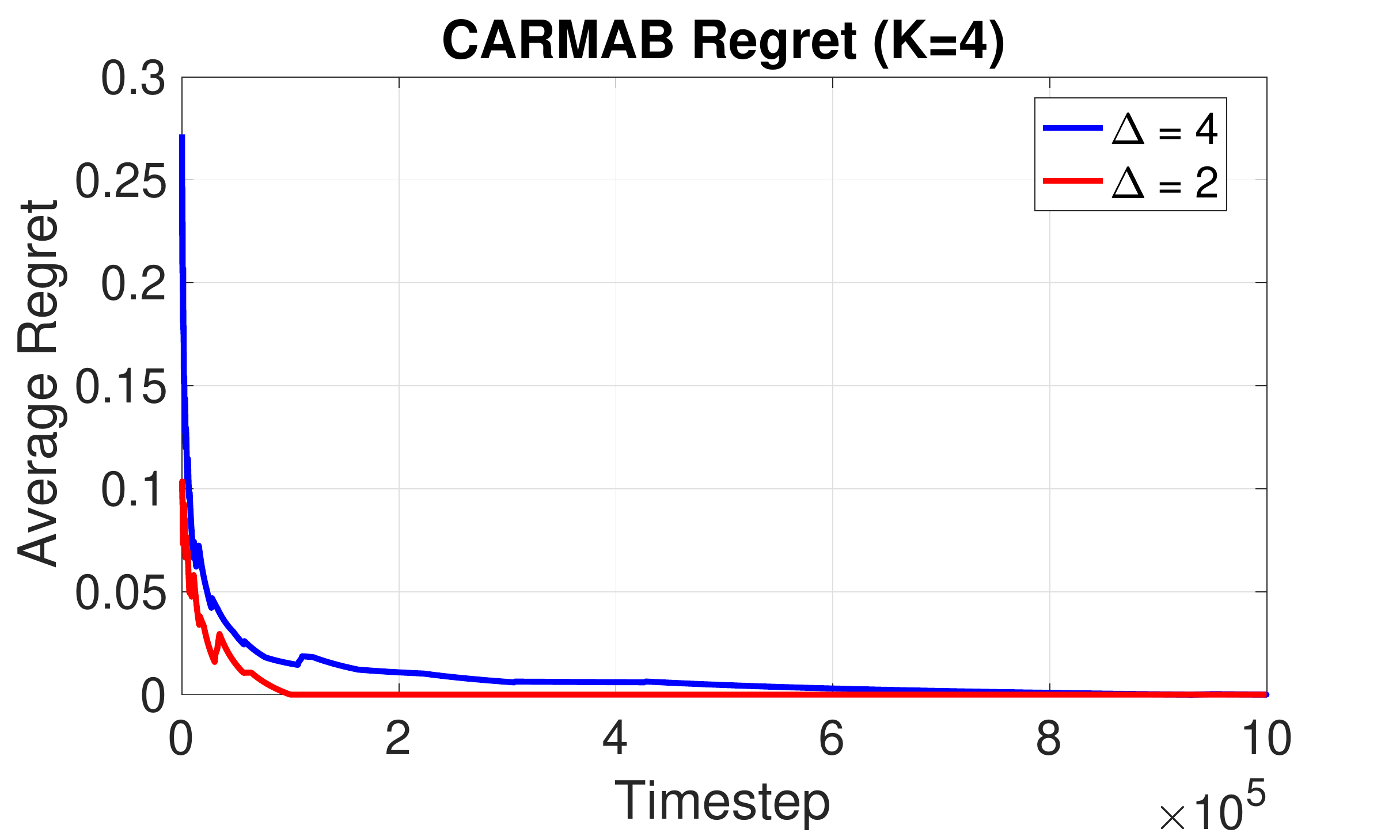}
\includegraphics[width=0.48\textwidth]{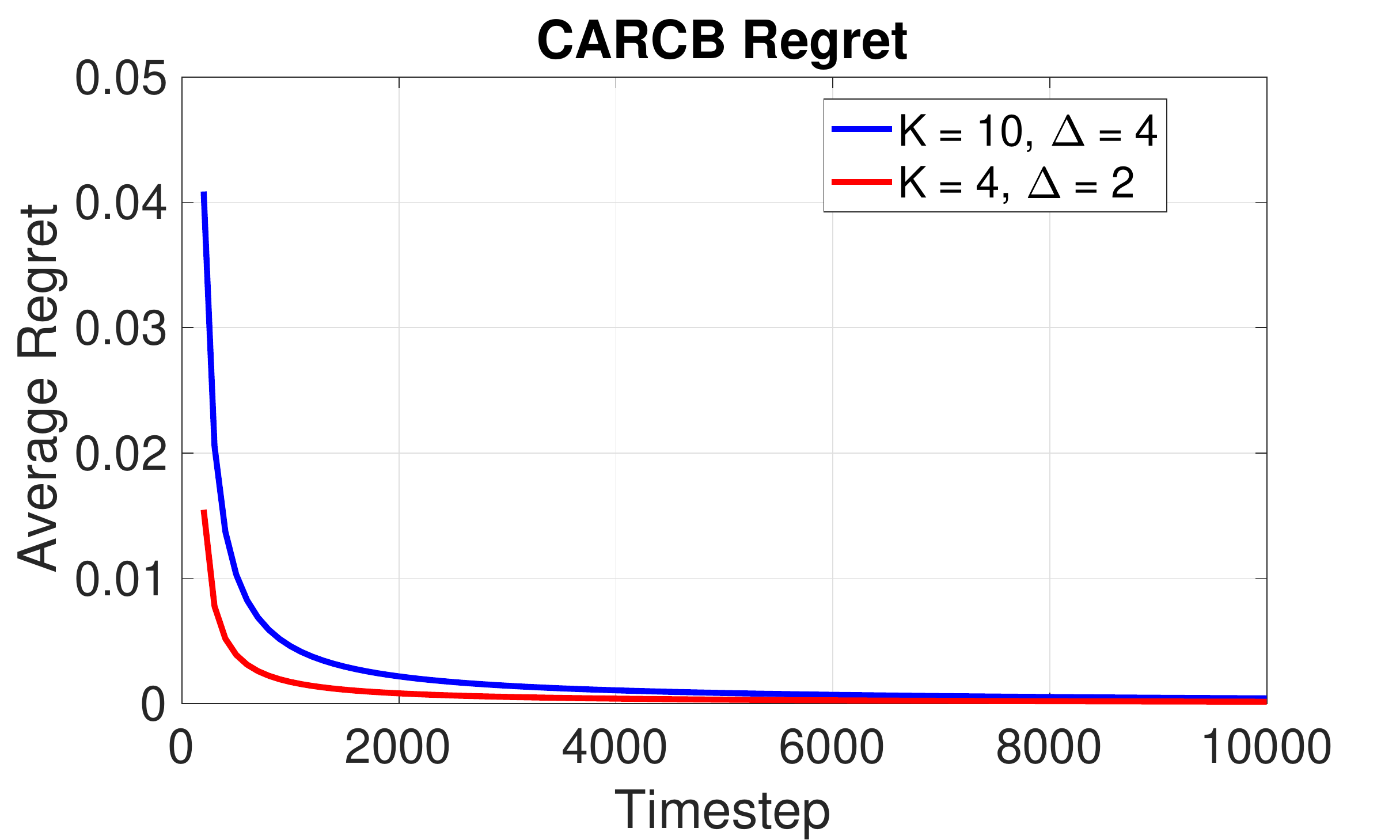}
\caption{\label{experiment-1}(Left) No-regret property of \algmab. \algmab is able to learn optimal sequence of arms to play and enjoys a no-regret property.  Increasing the size of history window $\win$ makes the problem more challenging and requires larger number of time steps. (Right) No-regret property of \algcb.}
\end{figure*}

This concentration bound is not directly applicable to our setup because of two reasons: 1) the constructed Markov chain $\mdp_{\tilde{\pol}_\ep}$ might not be ergodic, and 2) the norm $\|d\|_{\dopt}$ might be unbounded in our setup. By using the fact that the diameter of the MDP $\mdp_{\tilde{\pol}_\ep}$ is bounded by $\win$, we use an intermediate policy in the time steps $\{t_\ep : t_{\ep}+\win \}$ in each episode reach a state starting from which the MDP is shown to be ergodic and have bounded norm $\|d\|_{\dopt}$. See Appendix~\ref{app:cong-cb} for details.

\paragraph{Minimum eigenvalue bound.} In order to obtain a regret bound for the stochastic setup,  we need to establish that the covariance matrix formed by these context-action features satisfies the minimum eigenvalue assumption. The challenge here is that the the features $\feat(\x_t, \ac_t)$ are not independent across time -- they are correlated since the algorithm's choice at time $t$ depends on the history $\hist_t$ which in turn depends on the past features. In Appendix~\ref{app:cong-cb} we get around this difficulty
by decoupling these dependencies and showing that even after this decoupling, the random variables still satisfy a sub-exponential moment inequality.

\section{Experimental evaluation}

In this section, we evaluate both our proposed algorithms, \algmab and \algcb, in the congested bandit framework and exhibit their no-regret properties.

We generate $K$ arms and assign a base reward of $\hat{r}_j\in (0,1)$ to each $j\in [K]$. We draw a noise parameter $\epsilon_{t,j}$ for every action $j$ and time step $t$. We set $\congf(\ac_t, \nacst(\hist_t, \ac_t))=1/\nacst(\hist_t, \ac_t)$. We also set the parameter $\Delta$, which controls the length of the history $\hist_t$ at time $t$. Then, the observed reward is
$\tilde{r}(\hist_t, \ac_t) = \frac{\hat{r}_{\ac_t}}{\nacst(\hist_t, \ac_t)} + \eps_{t,\ac_t}.$
We set parameter $\delta$ of algorithm \algmab to $0.1$. In terms of distributions, we draw $\hat{r}_j$ uniformly in $(0,1)$ and $\epsilon_{t,j}$ from $\mathcal{N}(0,0.1)$. In Figure \ref{experiment-1}, we present how the average regret of the algorithm changes as time progresses for $K=4$ and different values of the window size $\Delta$ during which congestion occurs. 

For evaluating \algcb, again we generate $K$ arms. For each arm $i\in [K]$ and each time step $t\in [T]$, we draw a random context. A context $x_{\ac,t}$ is a vector of 10 numbers which are drawn uniformly in $(0,1)$ and then normalized so that the Euclidean norm of the vector is unit. We also draw the true parameter $\theta_*$ in the same way. We assume each arm's context is available to the algorithm at each time step. We use the same noise and congestion function as in the previous section. The observed reward in this setting is:
\begin{equation*}\tilde{r}(\hist_t, \ac_t, x_{\ac,t}, \theta_*) = \frac{x_{\ac,t}^T \theta_*}{\nacst(\hist_t, \ac_t)} + \eps_{t,\ac_t}
\end{equation*}

In Figure \ref{experiment-1} we present how the average regret of \algcb changes over time, in a setting with similar $K$ and $\Delta$ ($K=\Delta=4$) and a setting with a number of actions larger than the congestion window ($K=10$ and $\Delta=2$).
\section{Discussion}
In this work, we introduced the problem of Congested Bandits to model applications such as traffic routing and music recommendations. Our proposed framework allows the utility of any action to depend on the number of times it was played in the past few time steps. From a theoretical perspective, this leads to a rich class of non-stationary bandit problems and we propose near-optimal algorithms for the multi-armed and linear contextual bandit setups. 

Our work naturally leads to several interesting open problems. In the multi-armed bandit setup, can we generalize the results to scenarios where the congestion function $\congf$ depends arbitrarily on the history $\hist_t$? How does the complexity of this unknown congestion function affect the regret bounds? In the contextual bandit setup, a natural followup to our results would be to extend these beyond linear rewards. More generally, for the non-episodic contextual MDP setup can we design no-regret algorithms which do not depend on the mixing time $\tmix^*$ of the underlying MDP? 

From an application perspective, our work is a first step towards incorporating network congestion as a constraint in the bandit formulation. Going forward, it would be interesting to view the arm choices as recommendations  to the users and incorporate user choice models in the framework as a step towards personalized route recommendations.

\newpage
\bibliography{refs}
\bibliographystyle{icml2022}
\newpage
\appendix
\onecolumn
\section{Proof for Congested Multi-armed bandits}\label{app:cmab}
In this section, we provide the proofs for the main results from Section~\ref{sec:cmab}. 

\subsection{Diameter of $\mdp_\mab$}
\begin{proposition}\label{prop:diam}
The diameter of the MDP $\mdp_\mab$ is at most $\win$. 
\end{proposition}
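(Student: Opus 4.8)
The plan is to exploit the deterministic shift structure of the transition dynamics of $\mdp_\mab$. Since every transition $P(\cdot\mid \st,\ac)$ places all its mass on a single successor state, for any stationary policy $\pol$ the trajectory started at $\st$ is deterministic, so the random hitting time $\tau(\st'\mid \st,\mdp_\mab,\pol)$ is in fact a deterministic quantity (possibly $+\infty$ if the trajectory cycles without visiting $\st'$). Consequently $\min_\pol \En[\tau(\st'\mid \st,\mdp_\mab,\pol)]$ is nothing but the length of the shortest directed walk from $\st$ to $\st'$ in the transition graph on $\Hset$. The whole task therefore reduces to showing that for every pair of histories $\st,\st' \in \Hset$ there is a walk of length at most $\win$ from $\st$ to $\st'$.

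The crux is a ``flushing'' observation about the shift-and-append dynamics. Write $\st = [s_1,\ldots,s_\win]$ and fix any action sequence $a_1,\ldots,a_\win \in [\K]$. I would show by a direct induction on $k$ that after playing $a_1,\ldots,a_k$ from $\st$ the state equals $[s_{k+1},\ldots,s_\win,a_1,\ldots,a_k]$: the base case is the single-step rule $\st' = [\st_{2:\win},\ac]$, and the inductive step just applies this rule once more, dropping $s_{k+1}$ and appending $a_{k+1}$. Taking $k=\win$ gives the state $[a_1,\ldots,a_\win]$, which no longer depends on the starting history $\st$ at all. Hence, to reach a prescribed target $\st' = [s'_1,\ldots,s'_\win]$, I simply choose $a_i = s'_i$; the resulting $\win$-step walk lands exactly at $\st'$ regardless of where it began. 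This already exhibits a walk of length $\win$ between any two states.

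The one point needing care is translating ``there exists a walk of length $\win$'' into the policy-minimization form of the diameter, since policies in the definition are stationary maps $\pol:\Hset\to[\K]$ and the same state cannot be assigned two different actions. I would resolve this by passing to a shortest directed walk from $\st$ to $\st'$ (of length at most $\win$ by the previous paragraph); any shortest walk is a simple path, so each intermediate state is visited exactly once and I may define a stationary policy that plays the required outgoing action at each state on the path (and arbitrarily elsewhere). This stationary policy realizes the hitting time equal to the path length, so $\min_\pol \En[\tau(\st'\mid \st,\mdp_\mab,\pol)] \le \win$ for every $\st\neq\st'$, and taking the maximum over state pairs yields $\diam \le \win$. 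I expect this last bookkeeping step — ensuring a single stationary policy suffices — to be the only mild obstacle; the substantive content is entirely the flushing identity, which is essentially immediate from the window dynamics.
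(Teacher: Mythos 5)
Your proof is correct and follows essentially the same route as the paper's: both exploit the shift-and-append ("flushing") dynamics to show that playing the target history's entries as actions reaches $\st'$ from any $\st$ in at most $\win$ steps. The only cosmetic difference is how the stationary-policy bookkeeping is handled — you pass to a shortest walk and note it is a simple path, while the paper concatenates $[\hist_1;\hist_2]$ and resolves repeated length-$\win$ subsequences by a last-occurrence rule — but the substantive argument is identical.
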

\begin{proof}
In order to prove that the diameter of the MDP $\mdp_\mab$ is at most the window size $\win$, we need to show that starting form any (history) state $\hist_1$, there exists a policy which will take it to another state $\hist_2$ in $\win$ steps. First, note the both the states $\hist_1$ and $\hist_2$ are $\win$ dimensional vectors.

Since the dynamics are deterministic, we will construct a deterministic policy for any pair of states $\hist_1$ and $\hist_2$. The proof begins by concatenating the two histories to form a longer sequence of length $2\win$., that is $\tilde{\hist} = [\hist_1; \hist_2]$. For any sequence $\hist_{\win, i}$ of length $\win$ in the sequence $\tilde{\hist}$ starting at position $i$, set the policy $\pol(\hist_{\win, i}) = \tilde{\hist}[i + \win]$. For any subsequence $\hist_{\win, i}$ which occurs at multiple indices, say $\{i_1, \ldots, i_h\}$, set the policy $\pol(\hist_{\win, i}) = \tilde{\hist}[i+h + \win]$, that is corresponding to the last occurrence of this subsequence. 

The policy $\pol$ constructed above will be definition move from state $\hist_1$ to $\hist_2$ and in the worst-case when all the subsequences $\hist_{\win, i}$ are unique, it will take $\win$ steps to complete. Thus, the MDP $\mdp_\mab$ has diameter at most $\win$. \end{proof}

\subsection{Proof of Theorem~\ref{thm:reg-mab}}
We now begin with our proof of the regret bound for our algorithm \algmab (Algorithm~\ref{alg:mab-cong}) stated in Theorem~\ref{thm:reg-mab}. Recall we defined the regret of an algorithm 
\begin{align*}
       \reg_T(\alg;\polset, \congf) \defn \sup_{\pol \in \polset} \sum_{t=1}^\T \rew(\hist_t^\pol,\pol(\hist_t^\pol)) - \sum_{t=1}^\T \rewtil(\hist_t^\alg, \ac_t)\;,
\end{align*}
and the stationary reward of any policy $\pol \in \polset$ as 
\begin{align*}
    \ravg_\pol \defn \lim_{T\to \infty} \frac{1}{T}\sum_{t=1}^T \rew(\hist_t^\pol,\pol(\hist_t^\pol))\;.
\end{align*}
We begin by upper bounding the comparator reward  by the optimal average reward $\ravg^*$ in the following lemma. 
\begin{lemma}\label{lem:Ttoavg}
For the MDP $\mdp_\mab$, the reward of the optimal policy is
\begin{equation}
    \sup_{\pol \in \polset} \sum_{t=1}^\T \rew(\hist_t^\pol,\pol(\hist_t^\pol)) \leq T \ravg^* + \win.
\end{equation}
\end{lemma}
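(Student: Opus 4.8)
The plan is to exploit the average-reward Markov decision process structure of $\mdp_\mab$ together with the diameter bound $\diam \le \win$ from Proposition~\ref{prop:diam}. Since $\mdp_\mab$ has finite diameter it is communicating, so standard average-reward dynamic programming theory guarantees the existence of a scalar optimal gain --- which coincides with $\ravg^* = \max_{\pol \in \polset}\ravg_\pol$ --- together with a bias (relative value) function $h : \St \to \real$ solving the Bellman optimality equation. Because the transitions are deterministic, writing $\st^+(\st,\ac)$ for the unique successor $[\st_{2:\win},\ac]$, this equation reads
\begin{align*}
\ravg^* + h(\st) = \max_{\ac \in [\K]} \left\{ \rew(\st,\ac) + h(\st^+(\st,\ac)) \right\}.
\end{align*}

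First I would fix an arbitrary comparator policy $\pol \in \polset$ and let $\st_1, \st_2, \ldots, \st_{\T+1}$ be its (deterministic) trajectory, i.e. $\st_{t+1} = \st^+(\st_t, \pol(\st_t))$. Dropping the maximum in the Bellman equation to the specific action $\pol(\st_t)$ gives the pointwise inequality $\rew(\st_t,\pol(\st_t)) \le \ravg^* + h(\st_t) - h(\st_{t+1})$. Summing over $t = 1,\ldots,\T$ and telescoping the bias terms yields
\begin{align*}
\sum_{t=1}^\T \rew(\st_t, \pol(\st_t)) \le \T\ravg^* + h(\st_1) - h(\st_{\T+1}).
\end{align*}
Since this holds for every $\pol$, and hence for the supremum, it remains only to bound the residual $h(\st_1) - h(\st_{\T+1})$ by $\win$ uniformly over states.

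The main work --- and the step I expect to be the crux --- is bounding the span of the bias function by the diameter. I would do this directly from the deterministic structure: the Bellman equation implies that for every state $\st$ and every action $\ac$ one has $h(\st^+(\st,\ac)) - h(\st) \le \ravg^* - \rew(\st,\ac) \le \ravg^*$, using that rewards are nonnegative. Given any two states $\st, \st'$, the diameter bound (which for deterministic dynamics is exactly graph distance) supplies a sequence of actions steering $\st'$ to $\st$ in at most $\diam \le \win$ steps; chaining the one-step inequality along this path gives $h(\st) - h(\st') \le \diam \cdot \ravg^* \le \win$, where I use $\ravg^* \le 1$ because each reward lies in $[0,1]$. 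Applying this with $\st = \st_1$ and $\st' = \st_{\T+1}$ bounds the residual by $\win$ and closes the argument.

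The only genuine subtlety is justifying the existence of the gain--bias pair and that the gain equals $\ravg^*$; this is standard for communicating MDPs (e.g. Puterman), with the finiteness of the diameter playing the role of the communicating condition. An alternative, fully elementary route would decompose each trajectory into cycles plus a simple path and bound every cycle mean by $\ravg^*$, but the leftover simple path can be as long as $|\St| = \K^\win$, giving a far weaker residual. The bias-function argument is precisely what shrinks the relevant ``transient'' length from $\K^\win$ down to the diameter $\win$.
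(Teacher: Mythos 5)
Your proof is correct, but it takes a genuinely different route from the paper. The paper argues combinatorially: since the dynamics are deterministic and the state space finite, the comparator policy's trajectory is a transient prefix followed by a cycle; every cycle traversed by a stationary policy has mean reward at most $\ravg^*$, and the prefix can itself be closed into a cycle by returning to its start state in at most $\diam \le \win$ extra steps, so its total reward is at most $\ravg^*(\tau+\win) \le \ravg^*\tau + \win$. You instead invoke the average-reward Bellman optimality equation, telescope the bias along the trajectory, and bound the bias span by $\diam\cdot\ravg^* \le \win$ by chaining the one-step inequality $h(\st^+(\st,\ac)) - h(\st) \le \ravg^* - \rew(\st,\ac)$ along a shortest connecting path. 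Your route is the standard UCRL2-style argument (and is consistent with how the paper later bounds the per-episode regret via the Poisson equation and the width of the bias); what it buys is a clean, uniform handling of the end of the horizon --- the paper's step $R_{\pol^*,\tau+1:T} \le \ravg^*(T-\tau)$ silently ignores that the final segment may end mid-cycle, a gap your telescoping argument does not have --- at the price of invoking existence of a gain--bias pair for communicating MDPs, which the paper avoids. One small correction: the ``fully elementary'' cycle-decomposition route you dismiss does not leave a residual of length $|\St| = \K^\win$; the transient prefix of a deterministic stationary policy consists of distinct states but is closed into a cycle using only $\diam \le \win$ additional steps, which is exactly how the paper gets the $+\win$ term. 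So the elementary route is viable and is in fact the one the paper takes.
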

The proof of this lemma is deferred to later in the section. Taking this as given, we have that the regret
\begin{align}\label{eq:bnd-reg-avg}
     \reg_T(\algmab;\polset, \congf) \leq T\ravg^* - \sum_{t=1}^\T \rewtil(\hist_t^\alg, \ac_t) + \win\;.
\end{align}
We begin by decomposing the above regret term into sub-term for each episode. the following lemma shows that with high probability we can bound the regret in terms of the mean rewrd in each episode. 
\begin{lemma}\label{lem:rem-noise}
With probability at least $1-\del$, we have that the regret 
\begin{equation}
    \reg_T(\alg; \polset, \congf) \leq \sup_{\pol \in \polset}\sum_{\ep = 1}^\E \sum_{\ac, j} \num_\ep(\ac, j)(\ravg_\pol - \congf(\ac, j) \rmean_\ac) + \const\sqrt{\T \log\left(\frac{1}{\del} \right)} +\win\;.
\end{equation}
\end{lemma}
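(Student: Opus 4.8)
The plan is to start from the bound already established in~\eqref{eq:bnd-reg-avg}, namely
$\reg_T(\algmab;\polset,\congf) \le T\ravg^* - \sum_{t=1}^\T \rewtil(\hist_t^\alg,\ac_t) + \win$,
and to massage its right-hand side into the claimed episode-wise form. The only two things that need doing are (i) stripping the observation noise out of $\rewtil$ and controlling it, and (ii) rewriting the resulting deterministic quantity as a sum over episodes and over action/history-count pairs.

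First I would split the observed reward into its mean and noise parts. By the protocol, $\rewtil(\hist_t^\alg,\ac_t) = \rew(\hist_t^\alg,\ac_t) + \eps_t$ with $\eps_t \sim \mathcal{N}(0,1)$, so $T\ravg^* - \sum_t \rewtil(\hist_t^\alg,\ac_t) = \big(T\ravg^* - \sum_t \rew(\hist_t^\alg,\ac_t)\big) - \sum_{t=1}^\T \eps_t$. Since the action $\ac_t$ is $\mathcal{F}_{t-1}$-measurable, the partial sums of $\{-\eps_t\}$ form a martingale with $1$-sub-Gaussian increments; a sub-Gaussian (Azuma--Hoeffding) tail bound then gives $-\sum_{t=1}^\T \eps_t \le \const\sqrt{\T\log(1/\del)}$ with probability at least $1-\del$. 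This accounts for the additive $\const\sqrt{\T\log(1/\del)}$ term.

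Next I would handle the deterministic term $T\ravg^* - \sum_t \rew(\hist_t^\alg,\ac_t)$. Grouping the rounds by episode and then by the action/count pair $(\ac,j)$ visited within each episode, and using that the mean reward at such a visit is exactly $\congf(\ac,j)\rmean_\ac$, the played reward becomes $\sum_{t=1}^\T \rew(\hist_t^\alg,\ac_t) = \sum_{\ep=1}^\Ep \sum_{\ac,j} \num_\ep(\ac,j)\,\congf(\ac,j)\rmean_\ac$. Because the counts $\num_\ep(\ac,j)$ are determined solely by the algorithm's trajectory, they partition the horizon, i.e.\ $\sum_{\ep}\sum_{\ac,j}\num_\ep(\ac,j) = \T$, so $T\ravg^* = \sum_{\ep}\sum_{\ac,j}\num_\ep(\ac,j)\,\ravg^*$. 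Subtracting gives $T\ravg^* - \sum_t \rew(\hist_t^\alg,\ac_t) = \sum_{\ep}\sum_{\ac,j}\num_\ep(\ac,j)(\ravg^* - \congf(\ac,j)\rmean_\ac)$. Finally, since $\ravg^* = \sup_{\pol\in\polset}\ravg_\pol$ and the subtracted part $\sum_{\ac,j}\big(\sum_\ep\num_\ep(\ac,j)\big)\congf(\ac,j)\rmean_\ac$ does not depend on $\pol$, the supremum can be pulled outside the whole expression, yielding $\sup_{\pol\in\polset}\sum_{\ep}\sum_{\ac,j}\num_\ep(\ac,j)(\ravg_\pol - \congf(\ac,j)\rmean_\ac)$, which is precisely the leading term in the claim.

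I expect the decomposition itself to be essentially bookkeeping; the one point that needs a little care is that the episodes are data-dependent (their boundaries are set by the doubling rule in Algorithm~\ref{alg:mab-cong}), so the grouping $\sum_t = \sum_\ep(\,\cdot\,)$ and the counts $\num_\ep(\ac,j)$ must be read off the realized trajectory rather than fixed in advance. This causes no difficulty for the deterministic rearrangement, and for the noise term it is exactly why I would phrase the concentration as a martingale bound: the stopping-time structure of the episodes is harmless because $\sum_{t=1}^\T\eps_t$ runs over the fixed horizon $\T$ regardless of where the episode boundaries happen to fall.
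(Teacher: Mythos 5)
Your proposal is correct and follows essentially the same route as the paper: invoke Lemma~\ref{lem:Ttoavg} for the $T\ravg^* + \win$ upper bound on the comparator, regroup the algorithm's rounds by episode and by $(\ac,j)$ pair so that the mean reward collected is $\sum_{\ep}\sum_{\ac,j}\num_\ep(\ac,j)\congf(\ac,j)\rmean_\ac$, and control the deviation of the observed rewards from their means at rate $\sqrt{\T\log(1/\del)}$. The only (cosmetic) difference is that you bound the noise $\sum_t\eps_t$ via a martingale/sub-Gaussian argument whereas the paper applies Hoeffding conditionally on the realized visit counts; your framing is if anything the cleaner way to handle the data-dependent episode boundaries.
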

\begin{proof}
The reward $\rewtil(\hist_t^\alg, \ac_t)$ is a random reward received by the algorithm at time $t$. For any given pair of action $\ac$ and history count $j$, let us denote by $\numsum_T(a, j)$ the number of times this pair was observed in the run. By Hoeffding's inequality, we have
\begin{align*}
    \Pr\left(\sum_{t=1}^\T \rewtil(\hist_t^\alg, \ac_t) \leq \sum_{\ac, j}\numsum_\T(\ac, j) \congf(\ac, j)\rmean_\ac - \sqrt{\frac{T}{2}\log\left( \frac{1}{\del}\right)}\; |\; (\numsum_\T(\ac, j))_{\ac, j} \right) \leq \del\;,
\end{align*}
where the expectation is taken conditioning on the numbers of plays $(\numsum_\T(\ac, j))_{\ac, j}$. Noting that $\sum_\ep \num_\ep(\ac, j) = \numsum_\T(\ac, j)$ completes the proof. 
\end{proof}
As in Section~\ref{sec:cmab}, let us denote the regret in each episode $\ep$ by 
\begin{equation*}
\regep_\ep \defn \sum_{\ac, j} \num_\ep(\ac, j)(\ravg_{\pol} - \congf(\ac, j) \rmean_\ac).    
\end{equation*}
Going forward, we split our analysis into two cases depending on whether the true reward $\rew^*$ belongs to the set $\Rset_\ep$ or not for each episode $\ep$. 

\paragraph{Case 1: $\rew^* \notin \Rset_\ep$.} Let us denote by $t_\ep$ the start time of episode $\ep$, initialized as $t_1 = 1$. We will study the effect of those episodes on regret when the true rewards are not contained within the confidence bounds imposed by our algorithm. For episode $e$, denote by $\ind[\rew \notin \Rset_\ep]$, where the set $\Rset_\ep$ is the set of possible reward values available for epoch $e$. Then, the sum of these terms for each episode,
\begin{align}\label{eq:fail-ub}
     \sum_{\ep=1}^\Ep \regep_\ep \ind[\rew \notin \Rset_\ep] &\stackrel{\1}{\leq} \sum_{\ep=1}^\Ep \sum_{\ac, j} \num_\ep(\ac, j) \ind[\rew \notin \Rset_\ep]\nonumber\\
     &\stackrel{\2}{\leq} \sum_{t = 1}^\T t\ind[\rew \notin \Rset_\ep]\nonumber\\
     &\leq \sqrt{T} + \sum_{t = \T^{1/4}}^\T t\ind[\rew \notin \Rset_\ep]\;,
\end{align}
where $\1$ follows from the fact that $\ravg^* \leq 1$, $\2$ follows from the fact that $\sum_{\ac,j}\num_\ep(\ac, j) \leq \sum_{\ac, j}\numsum_\ep(\ac, j) = t_\ep-1$, and $\3$ follows from splitting the time horizon into two parts. Now for any pair $(\ac, j)$ with $n$ samples, we have from an application of Hoeffding's inequality,
\begin{align*}
    \Pr\left(|\rew^*(\ac, j) - \rewhat_n(\ac ,j)| \geq \sqrt{\frac{\log(\frac{2}{\del})}{2n}}\right) \leq \del\;,
\end{align*}
taking a union bound over all $n = \{1, \ldots, t-1)$ and $(\ac, j)$ pairs, we have
\begin{align*}
    \Pr\left(\forall (\ac ,j), \; |\rew^*(\ac, j) - \rewhat_n(\ac ,j)| \geq \sqrt{\frac{\log(\frac{2\K\win t^3}{\del})}{2\max(\numsum_\ep(\ac, j), 1)}}\right) \leq \frac{\del}{t^2}\;,
\end{align*}
where recall $\numsum_\ep(\ac, j) = \sum_{i < \ep} \num_\ep(\ac, j)$. Summing the above bound over all $t$ from $t = T^{1/4}$, we have
\begin{align*}
  \Pr\left(\forall (\ac ,j), t \in [T^{\frac{1}{4}}, T]\; |\rew^*(\ac, j) - \rewhat_n(\ac ,j)| \geq \sqrt{\frac{\log(\frac{8\K\win t^3}{\del})}{2\max(\numsum_\ep(\ac, j), 1)}}\right) \leq \del\;,
  \end{align*}
 Combining the above with equation~\eqref{eq:fail-ub}, we have, with probability at least $1-\del$, 
 \begin{align}\label{eq:bnd-bad}
       \sum_{\ep=1}^\Ep \regep_\ep \ind[\rew \notin \Rset_\ep] \leq \sqrt{T}\;.
 \end{align}

\paragraph{Case 2: $\rew^* \in \Rset_\ep$.} 
We now look at the regret $\regep_\ep$ in episodes where the the true reward function $\rew^*$ belongs to the set $\Rset_\ep$. The following upper bounds this regret in terms of the diameter $\win$ of the MDP as well as an  additional term depending on the number of times each action count pair $(\ac, j)$ is played. 
\begin{lemma}\label{lem:-reg-ep}
For episodes where the reward vector $\rew \in \Rset_\ep$, with probability at least $1-\del$, we have 
\begin{equation}
    \regep_\ep \leq \win + c\sqrt{\log\left(\frac{\win \Ac t_\ep}{\del}\right)}\sum_{\ac, j}  \frac{\num_\ep(\ac, j) }{\sqrt{\max(1, \numsum_\ep(\ac, j))}}\;.
\end{equation}
\end{lemma}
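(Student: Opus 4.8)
The plan is to run the standard optimism-based (UCRL2-style) per-episode analysis, specialized to our setting where the transition kernel is \emph{known} and deterministic, so that only the reward is uncertain. Write $\ravg^+_\ep \defn \max_{\pol\in\polset,\,\rew\in\Rset_\ep}\ravg_\pol(\rew)$ for the optimistic average reward attained by $\tilde\pol_\ep$, and let $\rew^+_\ep\in\Rset_\ep$ be the associated maximizing reward vector. The first step is optimism: since we are in the case $\rew^*\in\Rset_\ep$, the true-optimal pair $(\pol^*,\rew^*)$ is feasible in the maximization defining $\tilde\pol_\ep$, so the comparator obeys $\ravg_\pol \le \ravg^* \le \ravg^+_\ep$. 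Substituting this into the definition of $\regep_\ep$ and writing $\rew^*(\ac,j)=\congf(\ac,j)\rmean_\ac$ gives
\[
\regep_\ep = \sum_{\ac,j}\num_\ep(\ac,j)\bigl(\ravg_\pol - \congf(\ac,j)\rmean_\ac\bigr) \le \sum_{\ac,j}\num_\ep(\ac,j)\bigl(\ravg^+_\ep - \rew^*(\ac,j)\bigr).
\]

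Second, I would replace $\ravg^+_\ep$ by the optimistic rewards actually collected, paying the diameter. Because $\tilde\pol_\ep$ has constant average reward $\ravg^+_\ep$ in the optimistic MDP, the average-reward Bellman (Poisson) equation supplies a bias function $w:\St\to\real$ satisfying $\ravg^+_\ep + w(\st) = \rew^+_\ep(\st,\tilde\pol_\ep(\st)) + w(\st^+)$, where $\st^+$ is the deterministic successor of $\st$ under $\tilde\pol_\ep$. Summing this identity over the time steps $t$ of episode $\ep$ telescopes the bias terms, and the boundary terms are bounded by the span $\mathrm{sp}(w)$:
\[
\Bigl(\textstyle\sum_{\ac,j}\num_\ep(\ac,j)\Bigr)\ravg^+_\ep \le \sum_{t}\rew^+_\ep(\st_t,\ac_t) + \mathrm{sp}(w).
\]
Since rewards lie in $[0,1]$, the standard span bound gives $\mathrm{sp}(w)\le \diam$, and Proposition~\ref{prop:diam} yields $\diam\le\win$. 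Regrouping the collected rewards $\sum_t \rew^+_\ep(\st_t,\ac_t)$ by action–count pairs $(\ac,j)$ then produces
\[
\regep_\ep \le \win + \sum_{\ac,j}\num_\ep(\ac,j)\bigl(\rew^+_\ep(\ac,j) - \rew^*(\ac,j)\bigr).
\]

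Third and finally, I would bound the per-pair reward gap by the width of the confidence set. On the high-probability event given by a union bound of Hoeffding's inequality over all pairs $(\ac,j)$ — which is exactly the event forcing $\rew^*\in\Rset_\ep$ — one has $|\rewhat_\ep(\ac,j)-\rew^*(\ac,j)|\le 10\sqrt{\log(\Ac\win t_\ep/\del)/\max\{1,\numsum_\ep(\ac,j)\}}$, while $\rew^+_\ep\in\Rset_\ep$ gives the same bound for $|\rew^+_\ep(\ac,j)-\rewhat_\ep(\ac,j)|$ deterministically. The triangle inequality then controls $\rew^+_\ep(\ac,j)-\rew^*(\ac,j)$ by twice the width, reproducing the claimed inequality with $c=20$. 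The main obstacle is the middle step: justifying existence of the bias function $w$ and the span bound $\mathrm{sp}(w)\le\win$ for the optimistic average-reward MDP, and carrying out the telescoping cleanly across episodes that begin and end at arbitrary history states. The deterministic, known dynamics make this tractable — each deterministic policy induces an eventually-cyclic path whose mean cycle reward equals $\ravg^+_\ep$ — and crucially the span bound uses the diameter of the \emph{true} (known) transition structure rather than any optimistic transitions, which is precisely where knowing the dynamics simplifies the classical UCRL2 argument.
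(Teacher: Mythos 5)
Your proposal is correct and follows essentially the same route as the paper: optimism to replace the comparator's average reward by the optimistic gain, the Poisson (average-reward Bellman) equation with a telescoping bias term whose span is controlled by the diameter bound $\diam\le\win$ from Proposition~\ref{prop:diam}, and a triangle inequality over the confidence widths since both the optimistic and true rewards lie in $\Rset_\ep$. The only cosmetic difference is that the paper adds and subtracts the optimistic reward $\congf(\ac,j)\rmeantil_\ac$ before invoking the Poisson equation, whereas you split $\ravg^+_\ep-\rew^*$ after telescoping; the two computations are identical.
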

We defer the proof this technical lemma to Appendix~\ref{app:lem-reg-ep}. Taking the above as given, we proceed with the proof of the main theorem. We now sum over all the episodes to obtain the final bound for the episodes when the confidence interval holds,
\begin{align}\label{eq:bnd-good}
    \sum_{\ep} \regep_\ep \ind[\rew \in \Rset_\ep] &\leq \win \Ep + c\sqrt{\log\left(\frac{\win \Ac \T}{\del}\right)}\sum_{\ac, j} \sum_\ep  \frac{\num_\ep(\ac, j) }{\sqrt{\max(1, \numsum_\ep(\ac, j))}}\nonumber \\
    &\stackrel{\1}{\leq}  \diam \Ep + c\sqrt{\log\left(\frac{\win \Ac \T}{\del}\right)}\sum_{\ac, j} \sqrt{\numsum(\ac, j)}\nonumber \\
    &\stackrel{\2}{\leq}\diam \Ep + c\sqrt{\log\left(\frac{\win \Ac \T}{\del}\right)}\cdot \sqrt{\Ac\win \T}\;,
\end{align}
where the inequality $\1$ follows from using the inequality\footnote{For a proof of this, see~\citet[Lemma 19]{jaksch2010}.}
\begin{equation}
\sum_{k = 1}^n \frac{z_k}{\sqrt{Z_{k-1}}} \leq (\sqrt{2}+1)\sqrt{Z_n}\;,
\end{equation}
and finally $\2$ follows from an application of Cauchy-Schwarz inequality. We next need to bound the number of total episodes $\Ep$. 

\paragraph{Bounding number of episodes.} Denote by $\numsum_\T(\ac, j)$  the total number of times $(\ac, j)$ was played in the entire run up to time $\T$ and by $m(\ac, j)$ the number of episode where the termination condition was satisfied for $(\ac, j)$. Then, we have
\begin{equation*}
    \numsum_\T(\ac, j) = \sum_{\ep}\num_\ep(\ac, j) \geq 1 + \sum_{i=1}^{m(\ac, j)}2^{i} = 2^{m(\ac, j)}\;.
\end{equation*}
Noting that $T = \sum_{\ac, j}\numsum_\T(\ac, j)$, we have,
\begin{equation*}
    \T \geq \K \win \cdot\left( \frac{1}{\K \win}\sum_{\ac, j}2^{m(\ac, j)}\right) \geq \K\win \cdot 2^{\frac{1}{\K\win}\sum_{\ac, j}m(\ac, j)} \geq  \K\win 2^{\frac{\Ep}{\K\win}-1}\;,
\end{equation*}
where the last inequality follows from the fact that $\Ep \leq \K\win + \sum_{\ac, j}m(\ac, j)$ with the first factor $\K\win$ accounting for the time steps when $\num_\ep(\ac, j) = 0$. Taking logarithm on both sides and simplifying, we get, 
\begin{equation}\label{eq:bnd-e}
    \Ep \leq \K\win + c\cdot \win \K \log\left(\frac{T}{\win \K} \right)\;.
\end{equation}
The final bound now follows from combining the bounds in equations~\eqref{eq:bnd-reg-avg}, ~\eqref{eq:bnd-bad},~\eqref{eq:bnd-good}, and~\eqref{eq:bnd-e} so that with probability at least $1-\delta$, we have,
\begin{equation}
    \reg_T(\algmab;\polset, \congf)\leq  c\cdot \win^2 \K \log\left(\frac{T}{\win \K} \right) + c\sqrt{\K\win \T\log\left(\frac{\win \K \T}{\del}\right)}+  \const\sqrt{\T \log\left(\frac{1}{\del} \right)}  + \win
\end{equation}

\subsubsection{Proof of Lemma~\ref{lem:Ttoavg}}
Any policy $\pol \in \polset$ will eventually form a cycle in the state space $\St$ because of the finiteness of the state space. Let $\tau^*$ denote the time at which policy $\pol^*$ begins its cycle. Then, the total reward of this optimal policy can then be decomposed into two parts
\begin{align}
    R_{\pol^*} = R_{\pol^*, 1:\tau} + R_{\pol^*, \tau+1:T}.
\end{align}
By the optimality of $\ravg^*$, we have that $R_{\pol^*, \tau+1:T} \leq \rho^* (T - \tau)$. For the first term, observe that states observed in time $1:\tau$ are unique since $\tau+1$ is the first time the policy started its cycle. Since the diameter of the MDP is at most $\win$, we have
\begin{align}
    R_{\pol^*, 1:\tau} \leq \tau \ravg^*(\tau + \win)\;,
\end{align}
since we can go back to the start state at time $t=1$ from $t= \tau$ in at most $\win$ more steps. Since the reward is bounded by $1$, the desired claim follows. \qed

\subsubsection{Proof of Lemma~\ref{lem:-reg-ep}}\label{app:lem-reg-ep}
We will introduce some notation for this proof. Let $\numsum_\ep(\ac,j) = \sum_{i=1}^{\ep-1} \num_{i}(a,j)$  the number of times the action $\ac$ was selected when it had a history of $j$ counts in the episodes preceding $\ep$. Further, denote by $P_\ep$ the transition matrix induced by the policy $\pol_\ep$ selected in episode $\ep$ and the vector $\num = \text{vec}(\num_\ep(\ac, j))$. Let $\rmeantil_\ac$ denote the mean optimistic reward obtained for arm $\ac$ in this episode. The regret in episode $\ep$ is 
\begin{align}
    \regep_\ep &\leq \sum_{\ac, j}  \num_\ep(\ac, j)(\tilde{\ravg}_\ep - \congf(\ac, j) \rmeantil_\ac) + \sum_{\ac, j}  \num_\ep(\ac, j)\congf(\ac, j) (\rmeantil_\ac - \rmean_\ac)\nonumber\\
    &\stackrel{\1}{\leq} \num_\ep^\top (P_\ep-I)\lambda_\ep + \sum_{a, j}  \num_\ep(\ac, j)\congf(\ac, j) (\rmeantil_\ac - \rmean_\ac)\nonumber\\
    &\stackrel{\2}{\leq} \num_\ep^\top (P_\ep-I)w_\ep + \sum_{a, j}  \num_\ep(\ac, j)\congf(\ac, j) (\rmeantil_\ac - \rmean_\ac)\nonumber\\
    &= \sum_{t = t_\ep}^{t_{e+1}-1} (\basis_{s_{t+1}} - \basis_{s_t})w_\ep + \sum_{a, j}  \num_\ep(\ac, j)\congf(\ac, j) (\rmeantil_\ac - \rmean_\ac) \nonumber \\
    &\leq w_\ep(\st_{t_{e+1}}) - w_\ep({\st_{t_\ep}}) + \sum_{a, j}  \num_\ep(\ac, j)\congf(\ac, j) (\rmeantil_\ac - \rmean_\ac)\nonumber \\
    &\stackrel{\3}{\leq} \win + \sum_{a, j}  \num_\ep(\ac, j)\congf(\ac, j) (\rmeantil_\ac - \rmean_\ac)
\end{align}
where inequality $\1$ follows from the Poisson equation $\lambda = \rew - \ravg 1 + P \lambda $ and  $\2$ follows from denoting by $w_\ep(\st) = \lambda_\ep(\st)- \frac{\min \lambda_\ep(s) + \max \lambda_\ep(s)}{2}$, and $\3$ follows from noting that $\|w_\ep\|_\infty \leq \frac{\win}{2}$ because the value function of optimal policy has width $\win$. We now focus on the second term in the expression above. Since both $\rmeantil_{\ac}$ and $\rmean_\ac$ belong to the set $\Rset_\ep$, we have 
\begin{align}
    \regep_\ep &\leq D + c\sum_{\ac, j} \num_\ep(\ac, j) \congf(\ac, j) \sqrt{\frac{\log(\win \Ac t_\ep/\del)}{\max(1, \numsum_\ep(\ac, j))}}\nonumber\\
    &\stackrel{\1}{\leq} D + c\sqrt{\log\left(\frac{\win \Ac t_\ep}{\del}\right)}\sum_{\ac, j}  \frac{\num_\ep(\ac, j) }{\sqrt{\max(1, \numsum_\ep(\ac, j))}}
\end{align}
where $\1$ follows from using the fact that $\congf \in [0,1]$. \qed 

\subsection{Extension to routing on graphs}\label{app:st-mab}
 In this section, we study the an extension of the congested MAB setup where the arms correspond to edges on graph $\G = (\V, \E)$ with a pre-defined start state $\start$ and goal state $\targ$. Going froward, we use the $\V$ and $\E$ to denote both the edge and vertex sets as well as their sizes. We recap some of the notation and setup introduced in Section~\ref{sec:strouting}.  

\noindent On round $t = 1, \ldots, \T,$\vspace{-3mm}
\begin{itemize}[leftmargin=5mm]
  \item learner selects an $\start$-$\targ$ path $\route_t$ on the graph $G$\vspace{-1mm}
  \item learner observes reward $\rewtil(\hist_t, \ed_{i,t}) = \congf(\ed_{i,t}, \nacst(\hist_t, \ed_{i,t}))\cdot \rmean_{\ed_{i,t}} + \eps_t$  for each  $\ed_{i,t}$ on path $\route_t$\vspace{-1mm}
  \item history changes to $\hist_{t+1} = [\hist_{t, 2:\win}, \route_t]$\vspace{-1mm}
\end{itemize}

In comparison to the multi-armed bandit protocol, the learner here selects an $\stpath$ path on the graph $\G$, the history at any time $t$  consists of the entire set of paths $\{\route_{t-\win}, \ldots, \route_{t-1}\}$, and we assume that the congestion function on each edge $\congf(\hist, \ed)$ depends on the number of times this edge has been used in the past $\win$ time steps. Our algorithm for this setup extends the \algmab algorithm to now consider paths over the graph $\G$ instead of arms. The detailed algorithm is presented in Algorithm~\ref{alg:mabst-cong}.
\begin{algorithm}[t!]
	\DontPrintSemicolon
	\KwIn{Congestion window $\win$, confidence parameter $\del \in (0,1)$, graph $\G = (\V, \E)$, time horizon $\T$, start state $\start$ and goal state $\targ$.}
   \textbf{Initialize:} Set $t = 1$ \;
   \For{episodes $\ep = 1, \ldots, \Ep$}{
   \textbf{Initialize episode}\;
   Set start time of episode $t_\ep = t$.\;
   For all edges $\ed$ and historical count $j$, set $\num_\ep(\ed, j) = 0$ and $\numsum_\ep(\ed, j) = \sum_{s < e}\num_s(\ed, j)$.\;
   Set empirical reward estimate for each edge and historical count \begin{equation*}
       \rewhat_\ep(\ed, j) = \frac{\sum_{\tau = 1}^{t_\ep-1}\rew_\tau \cdot \ind[\ed \in \route_\tau, j_\tau = j]}{\max\{1, \numsum_\ep(\ed, j) \}}.
   \end{equation*}\;\vspace{-6mm}
   \textbf{Compute optimistic policy}\;
   Set the feasible rewards 
   \begin{equation*}
       \Rset_\ep = \left\lbrace\rew \in [0,1]^{\E \times (\win + 1)}\; | \; \text{for all } (\ed, j), |\rew(\ed, j) - \rewhat_\ep(\ed, j)| \leq c\sqrt{\frac{\log(\len \E \win t_\ep/\del)}{\max\{1, \numsum_\ep(\ac, j) \}}} \right\rbrace
   \end{equation*} \;\vspace{-6mm}
   Find optimistic policy $\tilde{\pol}_\ep = \arg\max_{\pol \in \polset, \rew \in \Rset_\ep} {\ravg}(\pol, \rew)$.\;
   \textbf{Execute optimistic policy}\;
   \While{$\num_\ep(\ac, j) < \max\{1, \numsum_\ep(\ac, j)\}$ }{
   Select path $\route_t = \tilde{\pol}_\ep(\st_t)$, obtain reward $\rewhat_t$.\;
   Update $\num_\ep(\ed, \#(\st_t, \ed)) = \num_\ep(\ed, \#(\st_t, \ed)) + 1$ for each edge on path $\route_t$.\;
   Set $t = t+1$.\;}
   }
	\caption{\algmabst: Congested multi-armed bandits}
  \label{alg:mabst-cong}
\end{algorithm}

For this setup, we denote by $\numst$ the total number of path with start state $\start$ and goal state $\targ$ and assume that each path is of length $\len$. This is without any loss of generality since we can augment the graph $\G$ with edges which have zero reward by an additional $\len$ edges. 
Our main result in this section is the following regret bound on the above modified algorithm. 
\begin{theorem}[Regret bound for \algmabst]\label{thm:reg-mab-st}
For any confidence $\del \in (0,1)$, congestion window $\win > 0$ and time horizon $\T$, the policy regret, of \algmabst is
\begin{align}
    \reg_T(\algmabst;\polset, \congf) \leq c\cdot \len  \win^2 \E\log\left(\frac{\len\T}{\win \K}\right) + c\sqrt{\len\E\win  \T\log\left(\frac{\len \E \win \T}{\del}\right)} +  \const\sqrt{\T\len \log\left(\frac{1}{\del} \right)}
\end{align}
with probability at least $1-\del$. 
\end{theorem}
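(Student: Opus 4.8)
The plan is to follow the MDP-reduction strategy of Theorem~\ref{thm:reg-mab} essentially verbatim, tracking the two places where the sum-over-edges structure of a path changes the scaling. First I would reduce the routing game to an average-reward MDP whose state is the window of the last $\win$ chosen paths $\st = \{\route_{t-\win},\ldots,\route_{t-1}\}$ and whose action is an $\stpath$ path $\route$, with deterministic window-shift transitions and one-step reward $\rew(\st,\route) = \sum_{\ed \in \route}\congf(\ed,\nacst(\st,\ed))\rmean_\ed \in [0,\len]$. The only structural differences from $\mdp_\mab$ are that the reward now decomposes additively over the $\len$ edges of the chosen path and that the per-step reward is bounded by $\len$ rather than $1$. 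The diameter bound of Proposition~\ref{prop:diam} transfers unchanged: concatenating two path-windows into a sequence of $2\win$ paths and playing the implied look-ahead policy reaches any target window in at most $\win$ steps, so the diameter is still at most $\win$.

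Next I would reproduce the regret decomposition. The comparator bound of Lemma~\ref{lem:Ttoavg} becomes $\sup_{\pol}\sum_t \rew(\hist_t^\pol,\pol(\hist_t^\pol)) \leq \T\ravg^* + \len\win$, since we lose at most $\win$ transient steps each worth at most $\len$. Peeling off the observation noise as in Lemma~\ref{lem:rem-noise} is where the first $\len$-factor enters: the reward seen at each step is a sum of $\len$ independent $\mathcal{N}(0,1)$ terms, so the accumulated noise is a sum of $\len\T$ Gaussians and Hoeffding's inequality contributes the term $\const\sqrt{\len\T\log(1/\del)}$. This reduces the problem to bounding the per-episode pseudo-regret $\regep_\ep = \sum_{\ed,j}\num_\ep(\ed,j)(\ravg_\pol - \congf(\ed,j)\rmean_\ed)$, now summed over the $\E(\win+1)$ edge-count pairs $(\ed,j)$.

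I would then split into the two cases of the MAB proof. When $\rew^* \notin \Rset_\ep$, a union bound over the $\E(\win+1)$ pairs and over time contributes only a lower-order $O(\sqrt{\T})$ term exactly as in eq.~\eqref{eq:bnd-bad}. When $\rew^* \in \Rset_\ep$, the Poisson-equation argument of Lemma~\ref{lem:-reg-ep} goes through, but the bias span is now $\|w_\ep\|_\infty \leq \diam \cdot \max_{\st,\route}\rew(\st,\route) \leq \len\win$, giving
\[
  \regep_\ep \leq \len\win + c\sqrt{\log\!\left(\tfrac{\len\E\win t_\ep}{\del}\right)}\sum_{\ed,j}\frac{\num_\ep(\ed,j)}{\sqrt{\max(1,\numsum_\ep(\ed,j))}}.
\]
Here it is important that the optimistic cycle reward decomposes additively over edges, so the per-edge-count confidence widths of $\Rset_\ep$ simply add along each path and optimism transfers to the mean-cycle objective.

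Finally I would sum over episodes. Applying the telescoping inequality $\sum_k z_k/\sqrt{Z_{k-1}} \leq (\sqrt2+1)\sqrt{Z_n}$ followed by Cauchy--Schwarz over the $\E(\win+1)$ pairs, and using the identity $\sum_{\ed,j}\numsum_\T(\ed,j) = \len\T$ (each round contributes $\len$ edge-plays), yields the second main term $c\sqrt{\len\E\win\T\log(\cdot)}$. The doubling termination on the $\E(\win+1)$ counters bounds the number of episodes by $\Ep \leq c\,\win\E\log(\len\T/\win\E)$, so the span-times-episodes contribution is $\len\win \cdot \Ep = c\,\len\win^2\E\log(\cdot)$, the first term. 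Combining the three pieces gives the claimed bound. I expect the main obstacle to be bookkeeping the two edge-induced factors consistently: the $\len$ in the bias span (via $r_{\max}=\len$) and the separate $\sqrt{\len}$ that arises both in the noise term (from $\len\T$ Gaussians) and in the confidence sum (from $\sum_{\ed,j}\numsum_\T(\ed,j)=\len\T$), while verifying that the per-edge additive structure of the reward makes the optimistic estimates and the max-mean-cycle planner behave exactly as in the single-arm case.
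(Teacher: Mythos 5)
Your proposal follows the paper's proof essentially verbatim: the same reduction to an MDP over windows of paths with diameter $\win$, the same noise-peeling and two-case episode decomposition, the same Poisson-equation bound with bias span $\len\win$, and the same episode count $\Ep \lesssim \win\E\log(\len\T/\win\E)$, with the $\len$ factors entering in exactly the two places you identify. The only bookkeeping detail worth noting is that, since each round contributes $\len$ edge-plays to $\sum_{\ed,j}\num_\ep(\ed,j)$, the per-episode pseudo-regret must be written with the normalized average reward $\ravg_\pol/\len$ (as the paper does via $\ravgl{\pol}$) rather than $\ravg_\pol$ itself, so that $\sum_{\ed,j}\num_\ep(\ed,j)\,\ravgl{\pol}$ correctly equals the comparator's cumulative reward over the episode.
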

\begin{proof}
We begin by constructing the MDP $\mdp_\mabst$ whose state space comprises all possible histories of s-t paths, that is, $\St = \hist_\win$ with its size $|\St| = \numst^\win$ and the action set $\Ac$ consists of all the s-t paths. As before, observe that the diameter of this MDP is $\win$. Following a similar argument as in Lemma~\ref{lem:Ttoavg} and~\ref{lem:rem-noise}, we have,
\begin{align}\label{eq:st-reg-up-1}
      \reg_T(\alg; \polset, \congf) \leq \sup_{\pol \in \polset}\sum_{\ep = 1}^\Ep \sum_{\ed, j} \num_\ep(\ed, j)(\ravgl{\pol} - \congf(\ed, j)\rmean_\ed) + \const\sqrt{\T\len \log\left(\frac{1}{\del} \right)} +\win\;,
\end{align}
where the above inequality follows from noting that $\T\len = \sum_{\ep} \sum_{\ed, j} \num_\ep(\ed, j)$ and the additional $\len$ factor in the second comes in because now the total reward for a path $\route_t$ can be between $[0, \len]$. Note that we have denote by $\ravgl{\pol} = \ravg_\pol/\len$.

let us denote the regret in each episode $\ep$ by 
\begin{equation*}
\regep_\ep \defn \sum_{\ed, j} \num_\ep(\ed, j)(\ravgl{\pol} - \congf(\ed, j) \rmean_\ed).    
\end{equation*}
As before, we split our analysis into two cases depending on whether the true reward $\rew^*$ belongs to the set $\Rset_\ep$ or not for each episode $\ep$.

\paragraph{Case 1: $\rew^* \notin \Rset_\ep$.} Let us denote by $t_\ep$ the start time of episode $\ep$, initialized as $t_1 = 1$. We will study the effect of those episodes on regret when the true rewards are not contained within the confidence bounds imposed by our algorithm. For episode $e$, denote by $\ind[\rew \notin \Rset_\ep]$. Then, the sum of these terms for each episode,
\begin{align}\label{eq:fail-ub-st}
     \sum_{\ep=1}^\Ep \regep_\ep \ind[\rew \notin \Rset_\ep] &\stackrel{\1}{\leq} \sum_{\ep=1}^\Ep \sum_{\ed, j} \num_\ep(\ed, j) \ind[\rew \notin \Rset_\ep]\nonumber\\
     &\stackrel{\2}{\leq} \len\sum_{t = 1}^\T t\ind[\rew \notin \Rset_\ep]\nonumber\\
     &\leq \len\sqrt{T} + \len\sum_{t = \T^{1/4}}^\T t\ind[\rew \notin \Rset_\ep]\;,
\end{align}
where $\1$ follows from the fact that $\ravg^* \leq 1$, $\2$ follows from the fact that $\sum_{\ed,j}\num_\ep(\ed, j) \leq \sum_{\ed, j}\numsum_\ep(\ed, j) = \len(t_\ep-1)$, and $\3$ follows from splitting the time horizon into two parts. Now for any pair $(\ed, j)$ with $n$ samples, we have from an application of Hoeffding's inequality,
\begin{align*}
    \Pr\left(|\rew^*(\ed, j) - \rewhat_n(\ed ,j)| \geq \sqrt{\frac{\log(\frac{2}{\del})}{2n}}\right) \leq \del\;,
\end{align*}
taking a union bound over all $n = \{1, \ldots, \len(t-1))$ and $(\ed, j)$ pairs, we have
\begin{align*}
    \Pr\left(\forall (\ed ,j), \; |\rew^*(\ac, j) - \rewhat_n(\ed ,j)| \geq \sqrt{\frac{\log(\frac{2\len\E\win t^3}{\del})}{2\max(\numsum_\ep(\ac, j), 1)}}\right) \leq \frac{\del}{t^2}\;,
\end{align*}
where recall $\numsum_\ep(\ac, j) = \sum_{i < \ep} \num_\ep(\ac, j)$. Summing the above bound over all $t$ from $t = T^{1/4}$, we have
\begin{align*}
  \Pr\left(\forall (\ac ,j), t \in [T^{\frac{1}{4}}, T]\; |\rew^*(\ac, j) - \rewhat_n(\ac ,j)| \geq \sqrt{\frac{\log(\frac{8\len\E\win t^3}{\del})}{2\max(\numsum_\ep(\ac, j), 1)}}\right) \leq \del\;,
  \end{align*}
 Combining the above with equation~\eqref{eq:fail-ub}, we have, with probability at least $1-\del$, 
 \begin{align}\label{eq:st-bad}
       \sum_{\ep=1}^\Ep \regep_\ep \ind[\rew \notin \Rset_\ep] \leq \len\sqrt{T}\;.
 \end{align}

 \paragraph{Case 2: $\rew^* \in \Rset_\ep$.} 
We now look at the regret $\regep_\ep$ in episodes where the the true reward function $\rew^*$ belongs to the set $\Rset_\ep$. The following upper bounds this regret in terms of the diameter $\win$ of the MDP as well as an  additional term depending on the number of times each action count pair $(\ac, j)$ is played. 
\begin{lemma}\label{lem:reg-ep-st}
For episodes where the reward vector $\rew \in \Rset_\ep$, with probability at least $1-\del$, we have 
\begin{equation}
    \regep_\ep \leq \len\win + c\sqrt{\log\left(\frac{\len \E \win  t_\ep}{\del}\right)}\sum_{\ac, j}  \frac{\num_\ep(\ac, j) }{\sqrt{\max(1, \numsum_\ep(\ac, j))}}\;.
\end{equation}
\end{lemma}
The proof of the above lemma follows exactly the same as Lemma~\ref{lem:-reg-ep} with the only caveat that the scale of rewards can now be $\len$. We now sum over all the episodes to obtain the final bound for the episodes when the confidence interval holds,
\begin{align}\label{eq:st-good}
    \sum_{\ep} \regep_\ep \ind[\rew \in \Rset_\ep] &\leq \len\win \Ep + c\sqrt{\log\left(\frac{\len \E \win  \T}{\del}\right)}\sum_{\ed, j} \sum_\ep  \frac{\num_\ep(\ed, j) }{\sqrt{\max(1, \numsum_\ep(\ed, j))}}\nonumber \\
    &\stackrel{\1}{\leq}  \len\win \Ep + c\sqrt{\log\left(\frac{\len \E \win  \T}{\del}\right)}\sum_{\ed, j} \sqrt{\numsum(\ed, j)}\nonumber \\
    &\stackrel{\2}{\leq}\len \win \Ep + c\sqrt{\log\left(\frac{\len \E \win \T}{\del}\right)}\cdot \sqrt{\len\E\win  \T}\;,
\end{align}
where the inequality $\1$ follows from using the inequality
\begin{equation*}
\sum_{k = 1}^n \frac{z_k}{\sqrt{Z_{k-1}}} \leq (\sqrt{2}+1)\sqrt{Z_n}\;,
\end{equation*}
and finally $\2$ follows from an application of Cauchy-Schwarz inequality. We finally end the proof with a bound on the number of episodes $\Ep$.

\paragraph{Bounding number of episodes.} Denote by $\numsum_\T(\ed, j)$  the total number of times $(\ed, j)$ was played in the entire run up to time $\T$ and by $m(\ed, j)$ the number of episode where the termination condition was satisfied for $(\ed, j)$. Then, we have
\begin{equation*}
    \numsum_\T(\ed, j) = \sum_{\ep}\num_\ep(\ed, j) \geq 1 + \sum_{i=1}^{m(\ed, j)}2^{i} = 2^{m(\ed, j)}\;.
\end{equation*}
Noting that $\len\T = \sum_{\ed, j}\numsum_\T(\ed, j)$, we have,
\begin{equation*}
    \len\T \geq \E \win \cdot\left( \frac{1}{\E \win}\sum_{\ed, j}2^{m(\ed, j)}\right) \geq \E\win \cdot 2^{\frac{1}{\E\win}\sum_{\ed, j}m(\ed, j)} \geq  \E\win \cdot 2^{\frac{\Ep}{\E\win}-1}\;,
\end{equation*}
where the last inequality follows from the fact that $\Ep \leq \E\win + \sum_{\ac, j}m(\ac, j)$. Taking logarithm on both sides and simplifying, we get, 
\begin{equation}\label{eq:bnd-e}
    \Ep \leq \E\win + c\cdot \win \E\log\left(\frac{\len\T}{\win \E} \right)\;.
\end{equation}
Using the above with equations~\eqref{eq:st-reg-up-1},~\eqref{eq:st-bad}, and~\eqref{eq:st-good}, we have with probability at least $1-\delta$, 
\begin{equation}
     \reg_T(\alg; \polset, \congf) \leq c\cdot \len  \win^2 \E\log\left(\frac{\len\T}{\win \E}\right) + c\sqrt{\len\E\win  \T\log\left(\frac{\len \E \win \T}{\del}\right)} +  \const\sqrt{\T\len \log\left(\frac{1}{\del} \right)} +\win
\end{equation}
This concludes the proof of the desired claim. 
\end{proof}

\section{Proofs for congested linear contextual  bandits}\label{app:cong-cb}
We now focus on the proofs for our results stated in Section~\ref{sec:cong-cb}. 

\subsection{Proof of Proposition~\ref{prop:reg-cb-known}}
Recall that the for the known context case, our notion of regret is defined as
\begin{align*}
    \reg_\T(\alg; \polset_\X, \congf, \params) \defn \sup_{\pol \in \polset} \sum_{t=1}^\T \rew(\hist_t^\pol, \pol(\hist_t^\pol, \x_t), \x_t;\params) - \sum_{t=1}^T \rewtil(\hist_t^\alg, \ac_t, \x_t;\params)\;,
\end{align*}
where the $\rewtil$ are the noisy reward observations received by the learner. Conditioned on the choice of actions, we have by Hoeffding's inequality with probability at least $1-\del$,
\begin{align*}
    \reg_\T(\alg; \polset, \congf, \params) \leq \sup_{\pol \in \polset} \sum_{t=1}^\T \rew(\hist_t^\pol, \pol(\hist_t^\pol, \x_t), \x_t;\params) - \sum_{t=1}^T \rew(\hist_t^\alg, \ac_t, \x_t;\params) + c\sqrt{T\log\left(\frac{\K}{\del} \right)}\;,
\end{align*}
where we have used the assumption that each reward is bounded in $[0,1]$. Going forward, we focus on the first two terms in the regret upper bound. Note that for any policy $\pol \in \polset$, we can decompose the regret into three terms,
\begin{align}
     \tilde{\reg}_\T(\alg; \polset, \congf, \params) &= \sum_{t=1}^T \rew(\hist_t^\pol, \pol(\hist_t^\pol, \x_t), \x_t;\params) - \sum_{t =1}^T \rew(\hist_t^\pol, \pol(\hist_t^\pol, \x_t), \x_t;\param_t)\nonumber \\
     &\quad + \sum_{t=1}^T \rew(\hist_t^\pol, \pol(\hist_t^\pol, \x_t), \x_t;\param_t) - \sum_{t=1}^T\rew(\hist_t^\alg, \ac_t, \x_t;\param_t)\nonumber\\
     &\quad + \sum_{t=1}^T\rew(\hist_t^\alg, \ac_t, \x_t;\param_t) - \sum_{t=1}^T \rew(\hist_t^\alg, \ac_t, \x_t;\params)\;,
\end{align}
where the decomposition evaluates these policies on $\params$ as well as the the parameter $\param_t$ chosen by the algorithm. Let us denote the three terms above as $R_1, R_2,$ and $R_3$. We will no bound these terms separately. 

\paragraph{Bound for terms $R_1$ and $R_3$.} Observe that the parameter only affects the reward obtained at any time step $t$ but does not affect the history $h_t$, which is only a function of the past actions played. Therefore, for any time $t$, we have
\begin{align}
    \rew(\hist_t^\pol, \pol(\hist_t^\pol, \x_t), \x_t;\params) -  \rew(\hist_t^\pol, \pol(\hist_t^\pol, \x_t), \x_t;\param_t) &= \inner{\params - \param_t}{\feat(x_t, \pol(\hist_t^\pol, \x_t))}\congf(\ac_t^\pol, \nacst(\hist_t^\pol, \ac_t^\pol))] \nonumber\\
    &\leq \|\params - \param_t\|_2 \cdot \|\feat(\x_t, \phi(x_t, \pol(\hist_t^\pol, \x_t)))\|_2\nonumber\\
    &\leq \|\params - \param_t\|\;\nonumber,
\end{align}
where $\ac_t^\pol = \pol(\hist_t^\pol, \x_t)$ and the last inequality follows from our assumption $\|\feat(\x, \ac)\|_2\|\leq 1$. Thus, for both terms $R_1$ and $R_3$, we have
\begin{align}\label{eq:bnd-r1}
    \sum_{t=1}^T \rew(\hist_t^\pol, \pol(\hist_t^\pol, \x_t), \x_t;\params) - \sum_{t =1}^T \rew(\hist_t^\pol, \pol(\hist_t^\pol, \x_t), \x_t;\param_t) \leq \win \sum_{\ep=1}^\Ep  2^\ep \|\params - \param_\ep\|_2\;.
\end{align}
Recall from Algorithm~\ref{alg:cb-known} that the parameter $\param_\ep$ is updated via a least-squares estimator with 
\begin{align}
    \param_\ep = \arg\min_{\param} \sum_{i = 1}^{t_\ep -1} (r_t - \inner{\param}{\feat(\x_t, \ac_t)}\congf(\hist_t)[\ac_t])^2\nonumber = \cov_{\ep}^{-1}\left(\frac{1}{t_\ep-1}\sum_{t = 1}^{t_\ep-1} r_t\feattil_t\right)\;,
\end{align}
where we have denoted by $\feattil_t \defn {\feat(\x_t, \ac_t)}\congf(\hist_t)$ and by $\cov_{\ep} = \frac{1}{t_{\ep}-1}\sum_{\tau < t_\ep} \feattil_t\feattil_t^\top$. The following lemma obtains a bound on the error in the estimation of the parameter $\params$.
\begin{lemma}\label{lem:ols-analysis}
Suppose that the covariance matrix $\cov_e$ satisfies Assumption~\ref{ass:min-eig}. Then, for any $\del>0$ and $t_e > \const d$, the OLS estimate $\param_e$ defined above satisfies
\begin{equation}
    \|\param_\ep - \params\|_2 \leq \frac{c}{\lmin} \sqrt{\frac{d}{t_\ep-1}\cdot \log\frac{1}{\del}}\;,
\end{equation}
with probability at least $1-\del$.
\end{lemma}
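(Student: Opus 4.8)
The plan is to treat this as a fixed-design-style analysis of ordinary least squares, the only subtlety being that the weighted features $\feattil_t = \feat(\x_t,\ac_t)\congf(\ac_t, \nacst(\hist_t,\ac_t))$ are \emph{predictable} (they depend on the history, hence on past actions) rather than i.i.d., so the noise term must be controlled with a martingale concentration argument rather than a plain Gaussian tail bound.

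First I would write the estimation error in closed form. Since the observed reward obeys $r_t = \inner{\params}{\feattil_t} + \eps_t$, substituting into the normal equations $\param_\ep = \cov_\ep^{-1}\big(\frac{1}{t_\ep-1}\sum_{t<t_\ep} r_t \feattil_t\big)$ and using $\cov_\ep = \frac{1}{t_\ep-1}\sum_{t<t_\ep}\feattil_t\feattil_t^\top$ gives the clean decomposition
\[
\param_\ep - \params = \cov_\ep^{-1}\Big(\tfrac{1}{t_\ep-1}\sum_{t<t_\ep}\eps_t\feattil_t\Big).
\]
Taking norms yields $\|\param_\ep-\params\|_2 \le \|\cov_\ep^{-1}\|_{\mathrm{op}}\cdot \big\|\frac{1}{t_\ep-1}\sum_t \eps_t\feattil_t\big\|_2$, so it remains to bound the two factors. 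For the first, Assumption~\ref{ass:min-eig} applied to $\cov_\ep$ (which is legitimate since $t_\ep > \const d$) gives $\lambda_{\min}(\cov_\ep)\ge \lmin$, hence $\|\cov_\ep^{-1}\|_{\mathrm{op}}\le 1/\lmin$.

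For the second factor I would use a covering argument combined with a sub-Gaussian martingale tail. Writing the norm as a supremum over the sphere, $\|\sum_t \eps_t\feattil_t\|_2 = \sup_{u\in S^{d-1}}\sum_t \eps_t\inner{u}{\feattil_t}$, I fix $u$ and note that $\feattil_t$ is $\mathcal{F}_{t-1}$-measurable while $\eps_t\sim\mathcal{N}(0,1)$ is independent of $\mathcal{F}_{t-1}$ and $1$-sub-Gaussian; thus $\sum_t \eps_t\inner{u}{\feattil_t}$ is a martingale whose conditional variance proxy is $\sum_t \inner{u}{\feattil_t}^2 \le \sum_t \|\feattil_t\|_2^2 \le t_\ep-1$, using $\|\feattil_t\|_2\le\|\feat(\x_t,\ac_t)\|_2\le 1$. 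An Azuma-type tail for sub-Gaussian increments then gives $|\sum_t \eps_t\inner{u}{\feattil_t}|\le c\sqrt{(t_\ep-1)\log(1/\del)}$ with probability $1-\del$ for each fixed $u$. Taking a $\tfrac12$-net of $S^{d-1}$ of size at most $5^d$, a union bound over the net (which injects the dimension through $\log 5^d$) together with the standard net-to-supremum doubling step yields $\|\sum_t \eps_t\feattil_t\|_2\le c\sqrt{(t_\ep-1)\,d\,\log(1/\del)}$.

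Combining the two factors and dividing by $t_\ep-1$ gives
\[
\|\param_\ep-\params\|_2 \le \frac{1}{\lmin}\cdot \frac{c\sqrt{(t_\ep-1)\,d\,\log(1/\del)}}{t_\ep-1} = \frac{c}{\lmin}\sqrt{\frac{d}{t_\ep-1}\log\frac{1}{\del}},
\]
as claimed. The main obstacle is exactly the noise-feature cross term: because the learner's actions, and hence $\feattil_t$, depend on the observed history, the summands are not independent, so one cannot directly invoke a Gaussian tail for $\sum_t \eps_t\feattil_t$. The predictability of $\feattil_t$ is precisely what makes the martingale structure available, and the covering/union-bound step is what produces the dimension factor $\sqrt{d}$. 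A self-normalized martingale inequality would give an alternative, slightly sharper route to the same conclusion, but the covering argument is the most transparent and suffices for the stated bound.
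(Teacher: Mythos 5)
Your proposal is correct, and it starts from exactly the same decomposition as the paper: substitute $r_t=\inner{\params}{\feattil_t}+\eps_t$ into the normal equations to get $\param_\ep-\params=\cov_\ep^{-1}\bigl(\frac{1}{t_\ep-1}\sum_{t<t_\ep}\eps_t\feattil_t\bigr)$, then control the inverse covariance via Assumption~\ref{ass:min-eig} and the noise--feature cross term via concentration. Where you diverge is in the concentration step: the paper simply invokes ``Chernoff's bound for Gaussian random variables'' to get a $\sqrt{\tr(\cov_\ep)/(t_\ep-1)}$ factor and then uses $\|\feattil_t\|_2\le 1$, whereas you make the predictability of $\feattil_t$ explicit, treat $\sum_t\eps_t\inner{u}{\feattil_t}$ as a martingale with conditionally sub-Gaussian increments, and recover the $\sqrt{d}$ through a $\tfrac12$-net over the sphere. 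Your route is the more defensible one: since $\ac_t$ (hence $\feattil_t$) depends on past rewards through $\param_\ep$, the summands $\eps_t\feattil_t$ are not independent, and a fixed-design Gaussian tail bound does not literally apply; the martingale argument closes that gap at no cost in the final rate. One small caveat you share with the paper: Assumption~\ref{ass:min-eig} is stated for the unweighted features $\feat(\x_t,\ac_t)$, while $\cov_\ep$ is built from $\feattil_t=\feat(\x_t,\ac_t)\congf(\cdot)$, so strictly one gets $\lambda_{\min}(\cov_\ep)\ge\cmin^2\lmin$ and the constant should carry a $\cmin$ factor --- which is indeed how it appears in Proposition~\ref{prop:reg-cb-known}, though not in the lemma statement itself.
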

We defer the proof of this lemma to Appendix~\ref{app:lem-ols-proof}. Taking a union bound over the number of episodes $\Ep$, we have for all episodes $\ep$ with probability at least $1-\del$,
\begin{equation*}
    \|\param_\ep - \params\|_2 \leq \frac{c}{\lmin} \sqrt{\frac{d}{t_\ep-1}\cdot \log\frac{\Ep}{\del}}\;,
\end{equation*}
Substituting the above in equation~\eqref{eq:bnd-r1}, we get, 
\begin{align}\label{eq:bnd-ols-cb}
   \win \sum_{\ep=1}^\Ep  2^\ep \|\params - \param_\ep\|_2 &\leq \win \cdot \frac{c}{\lmin} \sum_{\ep=1}^\Ep 2^e \sqrt{\frac{d }{t_{e}-1}\log\frac{\Ep}{\del}}\nonumber\\
    &\stackrel{\1}{\leq} \frac{\sqrt{\win}c}{\lmin }\cdot  \sum_{\ep = 1}^\Ep \frac{\sqrt{d}\cdot 2^{\ep}}{2^{\ep/2}} \sqrt{\log\frac{\Ep}{\del}}\nonumber\\
    &\stackrel{\2}{\leq} \frac{2\sqrt{\win}c}{\lmin }\cdot \frac{\sqrt{d}\cdot 2^{{\frac{\Ep}{2}}}}{\sqrt{2}-1}\cdot \sqrt{\log\frac{\Ep}{\del}}
\end{align}
where inequality $\1$ follows from the fact that $t_\ep -1  > \win 2^\ep$ and $\2$ follows from summing up the bound over $\Ep$ episodes. In order to bound the number of episodes $\Ep$, observe that,
\begin{align*}
    \T = \sum_{\ep=1}^\Ep \win 2^\ep &= \win\cdot  2(2^{\Ep+1} -1)
\end{align*}
from which it follows that $\Ep \leq \log(\frac{T}{2\win} +1)$. Substituting the above bound in equation~\eqref{eq:bnd-ols-cb}, we get,
\begin{equation}
    \win \sum_{\ep=1}^\Ep  2^\ep \|\params - \param_\ep\|_2 \leq \frac{\const}{\lmin} \cdot \sqrt{d(\T+\win) \cdot \log\frac{\log(T)}{\del}}\;.
\end{equation}

\paragraph{Bound for term $R_2$.} For term $R_2$, we note that the policy $\pol_e$ is the maximizer for the parameter $\param_\ep$ for time step steps $t \in \epset_\ep\setminus\{t_\ep, \ldots, t_\ep +\win\}$. Thus, for any epoch $\ep$,
\begin{align}
   \sum_{t = t_e}^{t_{e+1}-1} \rew(\hist_t^\pol, \pol(\hist_t^\pol, \x_t), \x_t;\param_e) - \rew(\hist_t^\alg, \ac_t, \x_t;\param_e) &= \sum_{t = t_e}^{t_{e}+\win} \rew(\hist_t^\pol, \pol(\hist_t^\pol, \x_t), \x_t;\param_e) - \rew(\hist_t^\alg, \ac_t, \x_t;\param_e) \nonumber\\
    &\quad+  \sum_{t = t_e+\win +1}^{t_{e+1}-1} \rew(\hist_t^\pol, \pol(\hist_t^\pol, \x_t), \x_t;\param_e) - \rew(\hist_t^\alg, \ac_t, \x_t;\param_e) \nonumber\\
    &\stackrel{\1}{\leq} \win\;,
\end{align}
where the inequality $\1$ follows by the optimality of the policy $\tilde{\pol}_e$ for $\param_e$ as well as by the boundedness of the reward $|\rew| \leq 1$. Combining these bounds, we get ,
\begin{align}
      \reg_\T(\alg; \polset, \congf, \params) \leq  \frac{\const}{\lmin} \cdot \sqrt{d(\T+\win) \cdot \log\frac{\log(T)}{\del}} +\win\log(T) + c\sqrt{T\log\left(\frac{\K}{\del} \right)}\;,
\end{align}
with probability at least $1-\del$. This concludes the proof. \qed

\subsubsection{Proof of Lemma~\ref{lem:ols-analysis}}\label{app:lem-ols-proof}
For any episode $e$, the error OLS estimate $\param_e$ from the true parameter $\params$ can be bounded as
\begin{align}
    \|\params - \param_e\|_2 &= \|\params - \cov_e^{-1}\sum_{t < t_e} r_t\feattil_t\|_2\nonumber \\
    &= \|\params - \cov_e^{-1}\left(\frac{1}{t_e-1}\sum_{t < t_e} \feattil_t\feattil_t^\top\params\right) + \cov_e^{-1}\left(\frac{1}{t_e-1}\sum_{t < t_e} \eps_t \feattil_t\right)\|_2\nonumber \\
    &= \|\cov_e^{-1}\left(\frac{1}{t_e-1}\sum_{t < t_e} \eps_t \feattil_t\right)\|_2\nonumber\\
    &\stackrel{\1}{\leq} \frac{c}{\lmin} \sqrt{\frac{\tr(\cov_e)}{t_e-1}\cdot \log\frac{1}{\del}}\;,
\end{align}
where the final inequality holds with probability at least $1-\del$ and follows from noting that the noise $\eps_t \sim \mathcal{N}(0,1)$ as well as an application of Chernoff's bound for Gaussian random variables. Using the fact that $\|\feattil\|_2 \leq 1$ completes the proof of the statement.
\qed

\subsection{Proof of Theorem~\ref{thm:reg-cb-un}}
Recall that in this setup, we assume that the learner has knowledge of the context distributions a priori and the algorithm in each epoch plans with respect to this distribution with 
\begin{align}
    \tilde{\pol}_\ep = \arg\max_{\pol\in \polset} \En[\sum_{t \in \epset_\ep\setminus\{t_\ep, \ldots, t_{\ep}+\win \}}\rew(\hist_t^\pol, \pol(\hist_t^\pol, x_t), \x_t;\param_\ep)]\;. 
\end{align}
The regret analysis proceed as before in the unknown case and we need to obtain a bound on two terms corresponding to the deviation of this expectation from the observed samples.  In order to do this, we will invoke concentration results for random process over Markov chains, in particular Lemma~\ref{lem:mc-conc} state in the main text \citep{chung2012}. 

Recall that every policy $\pi \in \polset$ is a mapping from history and contexts to a choice of action $\ac \in [K]$. For any fixed policy $\pol$, we can construct a Markov chain with state space comprising the all elements of the history set $\Hset$. For any pair of states $\hist', \hist$ in this chain, we have the transitions
\begin{equation}
    P_\pol(\hist'|\hist) = \begin{cases}
    P_{\Feat(x)}(\pol(\Feat(x),\hist) = \ac) \quad & \text{if } \hist' = [\hist_{2:\win}, \ac] \\
    0 \quad &\text{o.w.}
    \end{cases}\;,
\end{equation}
where we have used the notation $P_{\Feat(\x)}$ to include the randomness in the sampling of the features $\Phi(x) = \{ \feat(\x, a)\}_\ac$. Given this representation, we can define the function
\begin{align}\label{eq:fval-node}
\mcf_\pol(\hist) = \En_{\Feat(x)} [\rew(\hist, \ac, x; \param)]\;,
\end{align}
where the action $\ac = \pol(\Feat(x), \hist))$.  A key ingredient of our upper bound will be the mixing time of the policies $\pol$ in their respective Markov chains. Formally, we define the mixing time for a Markov chain below.

\begin{definition}[Mixing-time of Markov chain]
For an ergodic discrete time Markov chain $M$, let $\dst$ represent an arbitrary starting state distribution and let $\dopt$ denote the stationary distribution. The $\eps$-mixing time $\tmix(\eps)$ is defined as
\begin{align}
    \tmix(\eps) = \min\{t: \max_\dst \|\dst M^t - \dopt\|_{TV} \leq \eps\}.
\end{align}
\end{definition}

We restate the concentration bound from Lemma~\ref{lem:mc-conc} in the main paper.
\begin{lemma}[Theorem 3.1 in~\cite{chung2012}]
Let $M$ be an ergodic Markov chain with state space $[n]$ and stationary distribution $\dopt$. Let $\tmix = \tmix(\eps)$ be its $\eps$-mixing time for $\eps \leq \frac{1}{8}$. Let $(V_1,...,V_t)$ denote a $t$-step random walk on $M$ starting from an initial distribution $\dst$ on $[n]$. Let $\mu = \En_{V \sim \dopt}[f(V)]$ denote the expected reward over the stationary distribution and $X = \sum_{i} f(V_i)$ denote the sum of function on the random walk. There exists a universal constant $c>0$ such that
\begin{align}
    \Pr(|X - \mu t| \geq \delta \mu t) \leq c\| \dst\|_{\dopt} \exp\left(\frac{-\delta^2 \mu t}{72 \tmix}\right) \quad \text{for } 0 \leq \delta < 1\;,
\end{align}
where the norm $\|\dst\|^2_{\dopt} \defn \sum_i \frac{\dst_i^2}{\dopt_i}$. 
\end{lemma}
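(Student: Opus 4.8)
The plan is to prove this multiplicative Chernoff bound by the exponential-moment method, reducing the tail estimate to a bound on the moment generating function (MGF) $\En_\dst[e^{rX}]$ and then controlling that MGF through the operator norm of a perturbed transition operator. Throughout I take $f:[n]\to[0,1]$ (the boundedness and nonnegativity of the reward is what licenses the multiplicative form). Fix $r>0$. Markov's inequality applied to $e^{rX}$ gives $\Pr(X\ge(1+\delta)\mu t)\le e^{-r(1+\delta)\mu t}\,\En_\dst[e^{rX}]$, and the lower tail $\Pr(X\le(1-\delta)\mu t)$ is handled identically with $r<0$. Everything thus reduces to bounding $\En_\dst[e^{rX}]$ for $|r|$ small, and the final union over the two tails contributes only the universal constant $c$.

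Next I would write the MGF as a matrix product. Treating $\dst$ as a row vector and $M$ as row-stochastic, the probability of a path $(V_1,\dots,V_t)$ factorizes, so with the diagonal perturbation matrix $E_r\defn\mathrm{diag}(e^{rf(i)})_{i\in[n]}$ one obtains $\En_\dst[e^{rX}]=\dst\,E_r\,(M E_r)^{t-1}\mathbf{1}$, where $\mathbf{1}$ is the all-ones column vector. Passing to the $\dopt$-weighted inner product $\langle u,v\rangle\defn\sum_i u_i v_i/\dopt_i$, whose induced norm is exactly the $\|\cdot\|_\dopt$ appearing in the statement, and applying Cauchy--Schwarz peels off the factor $\|\dst\|_\dopt$; this is precisely where the $\|\dst\|_\dopt$ prefactor in the bound originates. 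The remaining task is to bound $\|(M E_r)^t\|$ in this weighted geometry.

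The core step, and the main obstacle, is to show $\|(M E_r)^t\|\le\exp\!\big(t\,(r\mu + C r^2\mu)\big)$ for $|r|$ below a threshold, with the quadratic coefficient $C$ controlled by the mixing time $\tmix$ rather than by a spectral gap (the chain is only assumed ergodic, not reversible, so eigenvalues are unavailable). I would decompose $M=\mathbf{1}\dopt+R$, where $\mathbf{1}\dopt$ is the rank-one projection onto the stationary distribution and $R$ is its mean-zero complement. Using $f\in[0,1]$ and the elementary convexity bound $e^{rf}\le 1+(e^r-1)f$, a Taylor expansion of $E_r$ shows the rank-one part contributes a leading factor $1+r\mu+O(r^2\mu)$, so the stationary contribution over $t$ steps is $e^{tr\mu}$ to second order. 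The mixing-time hypothesis enters in controlling $R$: since the chain has $\eps$-mixing time $\tmix$ with $\eps\le\tfrac18$, iterating $M$ for $\tmix$ steps contracts the mean-zero component, so blocking the $t$ steps into runs of length $\tmix$ lets me sum the perturbation series and absorb the cross-terms between $R$ and $E_r$ into the $Cr^2\mu$ term with $C$ proportional to $\tmix$. Making the ``after $\tmix$ steps the walk is essentially stationary'' intuition quantitatively rigorous for a non-reversible chain, and tracking the constants so they collapse to $72$, is the delicate part of the argument.

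Finally I would optimize the free parameter. Substituting the operator-norm estimate gives $\En_\dst[e^{rX}]\le\|\dst\|_\dopt\,\exp(t(r\mu+Cr^2\mu))$, so $\Pr(X\ge(1+\delta)\mu t)\le\|\dst\|_\dopt\,\exp(-r\delta\mu t + Cr^2\mu t)$. Choosing $r\approx\delta/(\text{const}\cdot\tmix)$ balances the linear and quadratic terms and yields an exponent of order $-\delta^2\mu t/(72\tmix)$, while the restriction $0\le\delta<1$ keeps $r$ within the admissible range for the MGF expansion. Adding the symmetric lower-tail estimate and collecting the prefactor $\|\dst\|_\dopt$ gives the claimed inequality with a universal constant $c$.
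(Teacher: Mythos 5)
The paper itself offers no proof of this lemma---it is imported verbatim from \citet{chung2012}---so the only meaningful comparison is with that source, and your outline reconstructs its argument faithfully: the exponential-moment reduction, the factorization $\En_{\dst}[e^{rX}] = \dst E_r (M E_r)^{t-1}\mathbf{1}$ with $E_r = \mathrm{diag}(e^{rf(i)})$, the Cauchy--Schwarz step in the $\dopt$-weighted inner product that produces the $\|\dst\|_{\dopt}$ prefactor, and, crucially, the replacement of spectral-gap machinery (Gillman/Lezaud style, unavailable for merely ergodic, non-reversible chains) by blocking the walk into length-$\tmix$ segments over which the mean-zero part of $M$ contracts, are exactly the ingredients of Theorem 3.1 there. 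Your final optimization $r \asymp \delta/\tmix$ yielding an exponent of order $-\delta^2 \mu t/\tmix$ for $0 \leq \delta < 1$ also matches, and the one step you flag as delicate (summing the perturbation series rigorously and tracking constants down to $72$) is precisely the technical core executed in the cited paper, so your proposal has no gap in approach, only in carried-out detail.
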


There are a couple of points to address here before we can directly apply this bound to our setup, namely, the bound on the norm $\|d\|_{\dopt}$ as well as the ergodicity of the Markov chain $M_\pol$. We address both of them below. 

\paragraph{Ergodicity of Markov chain $M_\pol$.} There are a few possibilities because of which the Markov chain might become non-ergodic. One is the existence of a few transient states, the other being the existence of multiple components within which the chain has different stationary distributions. We address this by considering the minimal subset of states of the Markov chain on which a stationary distribution exists which has maximum average reward according to the function defined in equation~\eqref{eq:fval-node}. That is, for any policy $\pol$, we define the average reward
\begin{align}
    \mu_\pol \defn \sup_{S \subseteq \Hset; M|_S \text{ is ergodic}} \En_{V \sim \dopt_S}[f(V)]\;.
\end{align}
This alleviates the problem of ergodicity, but in this process might make the norm $\|\dst \|_{\dopt}$ be potentially unbounded. We address this next. 

\paragraph{Bounding the norm $\|\dst\|_\dopt$.} At the time of an epoch change when we switch to a new policy, we might land up in a state from which running the policy $\pol_e$ might never reach the subset of states which maximizes the above definition. However, recall that our Markov decision process is actually communicating with a diameter of $\win$; thus in $\win$ time steps, one can always reach a starting state which has non-zero mass under the distribution $\dopt_\pol$. Also, it is easy to see that one can further select this start state, say $h_{0, \pol}$, such that $\|\ind[\hist_{0, \pol}]\|_{\dopt}^2 \leq \frac{1}{\K^\win}$.  

With the above two modifications, we are now in a position to use the mixing theorem. We define the worst-case mixing time over policies in the set $\polset_\X$ as 
\begin{equation}
   \tmixmax \defn \max_{\pol \in \polset_\X}\tau_{{\sf mix}, \pol}(\hist_{0, \pol}) \;,
\end{equation}
With this, for any epoch $e$ with selected policy $\pol_e$, we have with probability at least $1-\del$, 
\begin{align}\label{eq:bnd-mix-ex}
    |\sum_{t \in \epset_e} \En_{\Feat(x_t)}[\rew(\hist_t, \ac_t, \x_t; \params] - \En_{\hist \sim \dopt_{\pol_e}}\En_{\Feat(x)}[\rew(\hist, \ac_t, \x_t; \params]| &\leq c\sqrt{\win 2^e \cdot \tmixmax \cdot \log\left(\frac{\|\dst\|_{\dopt}}{\del}\right)}\nonumber \\
    &\stackrel{\1}{\leq} c\win \sqrt{2^e\tmixmax \cdot\log\left(\frac{\K}{\del}\right) }\;,
\end{align}
where the inequality $\1$ follows from the from the fact that there exists a starting state with $\|\ind[\hist_{0, \pol}]\|_{\dopt}^2 \leq \frac{1}{\K^\win}$. 

With these bounds in place, we can now first upper bound the regret via an application of Hoeffding's inequality to have with probability at least $1-\del$,
\begin{align}\label{eq:gauss-bnd-4}
    \reg_\T(\alg; \polset, \congf, \params) \leq \sup_{\pol \in \polset} \sum_{t=1}^\T \rew(\hist_t^\pol, \pol(\hist_t^\pol, \x_t), \x_t;\params) - \sum_{t=1}^T \rew(\hist_t^\alg, \ac_t, \x_t;\params) + c\sqrt{T\log\left(\frac{1}{\del} \right)}\;.
\end{align}
Focusing on the rewards of the algorithm, we have 
\begin{align*}
    \sum_{t=1}^T \rew(\hist_t^\alg, \ac_t, \x_t;\params) &= \sum_{t=1}^T \rew(\hist_t^\alg, \ac_t, \x_t;\params) - \En_{\Feat(x_t)}[\rew(\hist_t^\alg, \ac_t, \x_t;\params)|\Feat(x_{1:t-1})]\\
    &\quad + \sum_{t=1}^\T\En_{\Feat(x_t)}[\rew(\hist_t^\alg, \ac_t, \x_t;\params)|\Feat(x_{1:t-1})]  - \En_{\hist \sim d^*_\pol}\En_{\Feat(x)}[\rew(\hist, \pol_\ep(\Feat(x), \hist), \x;\params)]\\
    &\quad + \sum_{t=1}^\T \En_{\hist \sim d^*_{\pol_\ep}}\En_{\Feat(x)}[\rew(\hist, \pol_\ep(\Feat(x), \hist), \x;\params)]
\end{align*}
Observe that the first term can be upper bounded with probability $1-\del$ via an application of Hoeffding's inequality on the rewards as
\begin{align}\label{eq:gauss-bnd-1}
    \sum_{t=1}^T \rew(\hist_t^\alg, \ac_t, \x_t;\params) - \En_{\Feat(x_t)}[\rew(\hist_t^\alg, \ac_t, \x_t;\params)|\Feat(x_{1:t-1})] \leq \const \cdot \sqrt{\au\T\log\left( \frac{\K}{\del}\right)}\;,
\end{align}
where $\au$ is an upper bound on the operator norm of the covariance and the $\log(\K)$ term comes from union bound over the samplings of the $\K$ contexts. We can upper bound the second term by summing the bound from equation~\eqref{eq:bnd-mix-ex} and have with probability at least $1-\del$,
\begin{align}\label{eq:gauss-bnd-2}
    \sum_{t=1}^\T\En_{\Feat(x_t)}[\rew(\hist_t^\alg, \ac_t, \x_t;\params)|\Feat(x_{1:t-1})]  - \En_{\hist \sim d^*_\pol}\En_{\Feat(x)}[\rew(\hist, \pol_\ep(\Feat(x), \hist), \x;\params)] \leq \const \cdot \sqrt{\win \tmix^*\T\log\left( \frac{\K\log(\T)}{\del}\right)}\;.
\end{align}
This leaves us with the final set of terms in the regret bound comparing the rewards at stationary distribution. For any policy $\pol$, we have,
\begin{align}\label{eq:gauss-bnd-3}
    &\En_{\Feat(x)}\left[\sum_{t=1}^T \En_{h \sim d^*_\pol} [\rew(\hist, \pol(\Feat(x), \hist), \x;\params)] - \En_{h \sim d^*_{\pol_\ep}}[\rew(\hist, \pol_\ep(\Feat(x), \hist), \x;\params)]\right]\nonumber \\
    &\quad \leq \frac{\const}{\lmin} \cdot \sqrt{d(\T+\win) \cdot \log\frac{\log(T)}{\del}} +\win\log(T) \;,
\end{align}
with probability at least $1-\del$. The above follows from a calculation similar to the one performed for the known context case. To complete the proof, we need to obtain an upper bound on the minimum eigen value of the sample covariance matrix to show that Assumption~\ref{ass:min-eig} is indeed satisfied. 

\paragraph{Bound on minimum eigenvalue.} We now obtain a lower bound on the minimum eigenvalue  of the sample covariance matrix $\frac{1}{n} \sum_{t =1}^n \feat_t\feat_t^\top$ for any sample size $n$. The vectors $\feat_t \in \real^d$ are obtained by the algorithm's choice and are used to perform the least squares update on the parameter $\param_\ep$. At each time $t$, the algorithm observes $K$ different vectors $\feat_t^i \sim \mathcal{N}(\mu_i, I)$ and selects one of them. What makes the problem challenging is that the selected samples $\feat_t$ are no longer independent -- the algorithms choice at round $t$ can depend on all previous observations. 

For any unit vector $v \in \real^d$, we have
\begin{align*}
    v^\top \left( \frac{1}{n} \sum_t \feat_t\feat_t^\top \right) v = \frac{1}{n} \sum_{t} (v^\top\feat_t)^2 \geq \frac{1}{n} \sum_{t} \inf_{i}[(v^\top \feat_t^i)^2]\;.
\end{align*}
Notice that the last inequality makes the process independent across each time step since it only depends on the random samples $\{\feat_t^1, \ldots, \feat_t^K\}$. Let us denote the random variable  $X_t^i = (v^\top \feat_t^i)^2$ and by $X_t = \inf_{i}[(v^\top \feat_t^i)^2]$. Observe that the worst-case scenario for any $v$ is when $\inner{v}{\mu_i} = 0$ for all $i \in [K]$. Now, each random variable $X_t^i$ is sub-exponential with parameters $(\nu = 2,\alpha = 4)$ and satisfies
\begin{align*}
    \En[\exp\left(\lambda X_t^i \right)] \leq \exp\left(\frac{\lambda^2 \nu^2}{2}\right) \quad \text{for all } \lambda \text{ s.t. } |\lambda| \leq \frac{1}{\alpha}\;.
\end{align*}
Lets consider the moment generating function of the random variable $X_t$:
\begin{align*}
    \En[\exp\left(\lambda X_t \right)] = \En[\exp(\lambda  \min_i X_t^i )] \stackrel{\1}{\leq} \En[\exp(\lambda  X_t^i )]  \leq \exp\left(\frac{\lambda^2 \nu^2}{2}\right) \quad \text{for all } \lambda \text{ s.t. } |\lambda| \leq \frac{1}{\alpha}\;,
\end{align*}
where inequality $\1$ follows since from the fact that the $X_t^i$  are all positive and the exponential is an increasing function. Thus, we have established that each random variable $X_t$ is sub-exponential with parameters $(\nu, \alpha)$. Using a standard concentration bound for concentration of sub-exponential variables, we have for any unit vector $v$, 
\begin{align*}
    \Pr\left( \frac{1}{n}\sum_{t} \inf_{i}[(v^\top \feat_t^i)^2] \geq 1 - \sqrt{\frac{8}{n}\log\frac{2}{\delta}}  \right) \leq \delta\;,
\end{align*}
where the above inequality holds for any $\delta \in (0,1)$ and $n > c \log \frac{1}{\delta}$. Now, using a standard covering number argument over the unit vectors $v$, we have 
\begin{align}\label{eq:min-eig-normal}
    \inf_{v: \|v\|_2 = 1} v^\top \left( \frac{1}{n} \sum_t \feat_t\feat_t^\top \right) v \geq \inf_{v: \|v\|_2 = 1}\frac{1}{n} \sum_{t} \inf_{i}[(v^\top \feat_t^i)^2] \geq \al - c\al\sqrt{\frac{d}{n}\log \frac{1}{\delta}}\;,
\end{align}
with probability at least $1-\delta$ for any $\delta \in (0,1)$ and $n > c\cdot d\log\frac{1}{\delta}$.

Combining the bounds from equations~\eqref{eq:gauss-bnd-4},  ~\eqref{eq:gauss-bnd-1},~\eqref{eq:gauss-bnd-2},~\eqref{eq:gauss-bnd-3} and~\eqref{eq:min-eig-normal}, we have for $\T > \const d\log(\frac{1}{\del})$, with probability at least $1-\del$, 
\begin{align*}
    \reg_\T(\alg; \polset, \congf, \params) &\leq \const \cdot \sqrt{\au\T\log\left( \frac{\K}{\del}\right)}  + \const \cdot \sqrt{\win \tmix^*\T\log\left( \frac{\K\log(\T)}{\del}\right)}\\
    &\quad + \frac{\const}{\al} \cdot \sqrt{d(\T+\win) \cdot \log\frac{\log(T)}{\del}}\;.
\end{align*}
This concludes the proof of the theorem. 
\section{Additional experimental details}

In this section we discuss further details on the experimental evaluation, specifically how we compute the episode policy of \algcb. The computation follows a dynamic program (DP) that computes the optimal policy on the $\theta$ parameter that the algorithm maintains. The DP computes the optimal reward starting from state $(t, h_t)$, where $t$ is the time step and $h_t$ the previous history of length $\Delta$. Recall that the algorithm maintains parameter $\theta$ and also has access to the contexts $x_{\ac,t}$. Initialization is done for the final time step and every possibly history $h$ of length $\Delta$ as follows:
\[ OPT(T, h) = \max_{\ac\in [K]} \frac{x_{\ac,T}\cdot \theta}{\nacst(\hist, \ac)}. \]
The recursive step is then:
\[ OPT(t,\{\ac_{t-\Delta}, \ac_{t-\Delta-1},\ldots,\ac_{t-1}\}) = \max_{\ac\in [K]} \frac{x_{\ac,t}\cdot \theta}{\nacst(\hist, \ac)} + OPT(t+1, \{\ac_{t-\Delta-1}, \ac_{t-\Delta-2}, \ac_{t-1}, \ac\}). \]
Note that running the DP with parameter $\theta_*$ and the correct noise $\epsilon_{t,\ac}$ as follows yields the optimal policy:
\[ OPT(T, h) = \max_{\ac\in [K]} \frac{x_{\ac,T}\cdot \theta_*}{\nacst(\hist, \ac)}+\epsilon_{T,\ac}. \]
\[ OPT(t,\{\ac_{t-\Delta}, \ac_{t-\Delta-1},\ldots,\ac_{t-1}\}) = \max_{\ac\in [K]} \frac{x_{\ac,t}\cdot \theta_*}{\nacst(\hist, \ac)} + \epsilon_{t,\ac} + OPT(t+1, \{\ac_{t-\Delta-1}, \ac_{t-\Delta-2}, \ac_{t-1}, \ac\}). \]

\end{document}